\DeclareMathOperator*{\argmax}{arg\max}
\DeclareMathOperator*{\argmin}{arg\min}
\newtheorem{theorem}{Theorem}[section]
\newtheorem{proposition}{Proposition}
\newtheorem{lemma}{Lemma}
\newtheorem{definition}{Definition}
\newtheorem{assumption}{Assumption}
\title{\LARGE \bf
Optimization Solution Functions as Deterministic Policies for Offline Reinforcement Learning
}
\author{Vanshaj Khattar$^{1}$ and Ming Jin$^{1}$
\thanks{*This work is supported by NSF \#2331775, the Commonwealth Cyber Initiative (CCI), and C3.ai Digital Transformation Institute.}
 \thanks{$^{1}$Virginia Tech, USA.
        {\tt\small \{vanshajk, jinming\}@vt.edu }}
}
\begin{document}

\maketitle
\thispagestyle{empty}
\pagestyle{empty}

\begin{abstract}
Offline reinforcement learning (RL) is a promising approach for many control applications but faces challenges such as limited data coverage and value function overestimation. In this paper, we propose an implicit actor-critic (iAC) framework that employs optimization solution functions as a deterministic policy (actor) and a monotone function over the optimal value of optimization as a critic. By encoding “optimality” in the actor policy, we show that the learned policies are robust to the suboptimality of the learned actor parameters via the exponentially decaying sensitivity (EDS) property. 
We obtain performance guarantees for the proposed iAC framework and show its benefits over general function approximation schemes. Finally, we validate the proposed framework on two real-world applications and show a significant improvement over state-of-the-art (SOTA) offline RL methods.
\end{abstract}
\section{Introduction}
\label{sec:Introduction}
Offline RL \cite{levine2020offline} algorithms are able to learn good policies from historical data without any interaction with the environment, which can be costly due to the exploratory nature of online RL.
Ideally, an offline RL algorithm should be able to learn from large datasets to outperform any policy whose state-action distribution is well covered by the data. This is called learning consistency. However, in offline RL, there is no direct feedback from the environment, and instead, the feedback is from the dataset in the form of a ``consistency measure", e.g., Bellman residual \cite{antos2007fitted}. The difficulty with the Bellman residual is that it has high variance due to partial coverage of the data, i.e., when the data-collecting policy does not cover the entire state-action space, which often leads to overestimation of the value functions \cite{jin2021pessimism}. 

Existing works address the overestimation of the value functions either by constraining the policy updates \cite{fujimoto2021minimalist} or by encoding pessimism in the Q-learning updates \cite{cheng2022adversarially}. However, the policy learned using these approaches is not robust to the suboptimality of the learned policy parameters, i.e., any deviation in the policy parameters from the optimal parameters contributes to the suboptimality of the final learned policy and its performance. Moreover, a single set of policy parameters is learned for each time step without any component of forecasts in the policy. In this paper, we investigate \emph{whether the dependence of policy on the suboptimality of the learned parameters can be improved and whether there is any benefit to incorporating a forecasting mechanism in the policy.}

It is known that optimization-based approaches (especially model predictive control (MPC)) are still the de facto standard in many industries. These approaches typically use a lookahead predictive model to compute a generally reasonable policy based on an approximate model. Furthermore, these approaches are often more interpretable and can respect constraints specified on the trajectory. However, optimization-based approaches are prone to model misspecification errors, which can lead to suboptimal decisions.

To leverage the synergistic strength of both optimization-based and pessimistic offline RL methods, we propose an \emph{implicit actor-critic (iAC)} framework that employs optimization solution functions as a policy for the actor and a monotone function over the optimal value of optimization as a critic. By encoding “optimality” in the actor policy, we show that the learned policies are robust to the suboptimality of the learned actor parameters via the exponentially decaying sensitivity (EDS) property \cite{shin2022exponential}. EDS property has been widely established for the finite time optimal control problems (FTOCP) and states that the sensitivity of a solution of an FTOCP at some time step $t'$ against a parameter perturbation at time step $t$ decays exponentially with $|t'-t|$ under certain constraint qualification conditions \cite{shin2022exponential}. 

This EDS property is the main component in iAC. Even if value functions are overestimated, leading to suboptimal actor policy parameters, the decaying sensitivity ensures the resulting policy remains robust against policy parameter mismatch from the optimal. This is in contrast to the general function approximation schemes, where each policy parameter mismatch contributes to the policy mismatch and the final performance. 
Note that this attractive EDS property only holds around a neighborhood of optimal parameters. Therefore, we expand the region of robustness to estimation errors by combining iAC with relative pessimism (RP) \cite{cheng2022adversarially}. 




\emph{Contributions.} First, we propose an offline RL framework: implicit actor-critic (iAC) and show that the policies learned from iAC are robust to the suboptimality of the learned actor parameters due to the EDS property (Lemma \ref{lem:CompGenFuncApprox}). Second, we obtain the convergence rate for iAC (Theorem \ref{thm:iACguarantee}) as $\mathcal{O}\left(\frac{(1-\lambda^{T+1})}{(1-\gamma)(1-\lambda)}+ \frac{1}{(1-\gamma)n^{1/3}} \right)$, where $\lambda\in(0,1)$ is the EDS sensitivity parameter, $\gamma$ is the discount factor, $n$ is the dataset size, and $T$ is the planning horizon. This demonstrates the benefits of the EDS parameter $\lambda$ and an extended prediction horizon $T$ over the $\mathcal{O}\left(\frac{\sqrt{T+1}}{(1-\gamma)}+ \frac{1}{(1-\gamma)n^{1/3}} \right)$ rate obtained for the general function approximation.
Third, we validate the effectiveness of the iAC on two real-world applications showing its benefit over other SOTA offline RL baselines.
\subsection{Related Work}
Many existing works address the partial coverage and the overestimation of value functions by incorporating conservative policy updates. The idea is to enforce the learned policy to be close to the behavior policy by adding constraints in the objective via a policy regularizer \cite{fujimoto2021minimalist} or by using importance sampling with bounded ratio \cite{lee2021optidice}. In contrast, in iAC, there is no need to constrain the policy to be close to the behavior policy as the actor's robustness to the policy parameter mismatch is ensured due to the EDS property.

Other lines of work encode pessimism in the value function updates to overcome overestimation of the value functions \cite{xie2021bellman,cheng2022adversarially}. Relative pessimism (RP) was proposed in \cite{cheng2022adversarially}, which ensures a robust policy improvement over behavior policies. As the EDS property only holds in a local region around the optimal \cite{shin2022exponential}, iAC uses RP to expand this region of robustness to value function estimation errors. We show that due to the combined optimization-based and RP structure of iAC, it is able to achieve strong performance guarantees. Model-based offline RL approaches \cite{yu2021combo} combine pessimism with uncertainty quantification to improve model estimation. In contrast, iAC is an implicit model-based RL approach, where the policy parameters determine both the model and policy, and the parameters are updated not to estimate the model accurately but to optimize for the ultimate objective, i.e., maximize rewards.

Recently, many other works have emerged that learn the optimization policies by learning the parameters of the optimization model. In \cite{agrawal2020learning}, policies are learned by tuning the parameters within the convex optimization under known dynamics. In our framework, we consider unknown dynamics and the offline setting, which is more challenging.

\emph{Notation.} For any positive integers $t_1$, $t_2$, and $T$, we denote $[T] \coloneqq \{1, \cdots, T\}$; $[t_1:t_2] \coloneqq \{t_1,\cdots,t_2\}$; variable sequence $\{x_{t_1}, \cdots, x_{t_2}\}$ as $x_{t_1:t_2}$; $x_i$ as the $i^{th}$ entry of some vector $x$;  $\left\lfloor c \right\rfloor$ and $\lceil c\rceil$ denotes the floor and ceiling function on some scalar $c$, respectively.

\emph{Paper organization.} In Section \ref{sec:preliminaries}, we introduce the preliminaries for offline RL and EDS property. In Section \ref{sec:iAC framework}, we introduce the iAC framework and robustness property of the actor. In Section \ref{sec:TheoreticalAnalysis}, we obtain performance guarantees for the iAC. In Section \ref{sec:experiments} we validate iAC against other offline RL baselines and conclude the paper in Section \ref{sec:Conclusion}.
\section{Preliminaries}
\label{sec:preliminaries}
\subsection{Markov decision process and offline RL}
\label{subsec:MDP}
We consider an infinite horizon Markov decision-process (MDP) denoted by a tuple $(\mathcal{S},\mathcal{A},P,r,\gamma)$, where $\mathcal{S}\subseteq \mathbb{R}^{n_s}$ is the state space, $\mathcal{A}\subseteq \mathbb{R}^{n_a}$ is the action space, ${P}: \mathcal{S} \times \mathcal{A} \rightarrow M(\mathcal{S})$ is the transition function with $P(\cdot \mid s,a)$ denoting the next-state distribution after taking action $a$ at state $s$, and $M(\mathcal{S})$ is the set of all probability measures over the measurable space $\mathcal{S}$, $r:\mathcal{S} \times \mathcal{A}\rightarrow \mathbb{R}$ is the reward function, and $\gamma \in (0,1)$ is the discount factor. We use $\pi:\mathcal{S} \rightarrow \mathcal{A}$ as a deterministic policy mapping from the state space to the action space. The expected discounted return for policy $\pi$ is the expected cumulative rewards when $\pi$ is executed in the environment, i.e., $J(\pi) \coloneqq \mathbb{E}[\sum_{t=0}^\infty\gamma^t r(s_t,\pi(s_t))]$, where $s_{t+1} \sim P(\cdot \mid s_t,\pi(s_t))$. The Q-function for $\pi$ is denoted by $Q^{\pi}: \mathcal{S} \times \mathcal{A} \rightarrow \mathbb{R}$, where $Q^{\pi}(s,a) = \mathbb{E}[\sum_{t=0}^\infty\gamma^tr(s_t,a_t) \mid s_0 = s,a_0 = a]$, where the expectation is taken over the transition dynamics.

We denote the \emph{discounted visitation distribution} associated with policy $\pi$ as $d^\pi(s;\rho) \coloneqq (1-\gamma)\sum_{t=0}^\infty \gamma^t P(s_t = s|s_0 \sim \rho,\pi(s))$, where $\rho$ is the initial state distribution. We denote the Bellman evaluation operator by $\mathcal{T}^{\pi}: B(\mathcal{S} \times \mathcal{A}) \rightarrow B(\mathcal{S} \times \mathcal{A})$ for policy $\pi$ given as:
\begin{equation*}
    (\mathcal{T}^{\pi}Q)(s,a) = r(s,a) + \gamma \int_{\mathcal{S}}Q(s', \pi(s'))P(ds\mid s,a).
\end{equation*}
We denote the optimal policy by $\pi^*$, its corresponding expected return by $J(\pi^*)$ such that $J(\pi^*) = \underset{\pi}{\sup}\hspace{0.1cm}J(\pi)$, and $\nu^*$ as its induced state-action distribution. We further assume that the MDP has smooth dynamics.
\begin{assumption}
    \label{asmptn:SmoothMDP}
    The transition probability is Lipschitz with respect to actions, i.e., $|P(B|s,a_2) - P(B|s,a_1)| \leq L_P\|a_2-a_1\|_2$, where $B$ is a bounded measurable set of $\mathcal{S}$.
\end{assumption}

\emph{Goal in offline RL.} Let the RL agent have access to a dataset $\mathcal{D}$ consisting of $n$-tuples of $(s,a,s',r)$, collected using some behavior policy, where $s'$ denotes the next state. We assume the behavior policy induces a state-action distribution $\mu$. The goal in offline RL is to find a policy $\pi$ that maximizes $J(\pi)$, given access to the dataset $\mathcal{D}$. We define the empirical estimate w.r.t. dataset $\mathcal{D}$ as $\mathbb{E}_{\mathcal{D}}[y] = \frac{1}{n}\sum_{(s_i,a_i,s_i',r_i)\in \mathcal{D}}y((s_i,a_i,s_i',r_i))$ for some function $y$, and the $\nu$-weighted $\mathcal{L}_2$ norm for some function $f$ as $\|f\|_{2,\nu}^2 \coloneqq \int_{\mathcal{S}\times\mathcal{A}}f(s,a)^2 \nu(ds,da)$ for some distribution $\nu$.

\subsection{Optimization solution functions and EDS property}
\label{subsec:LookaheadOptimization}
A widely adopted method for control problems in industries is to formulate the decision-making problem as an approximate receding horizon MPC problem \cite{borrelli2017predictive}, where the future states and actions are captured through constraints as a part of the \emph{lookahead model}. We denote the parameters of the objective function and the constraints for such an MPC model collectively by $\theta \coloneqq \{\zeta_{0},\zeta_{1},\cdots,\zeta_{T}\}$, for finite time horizon $T$, where each $\{\zeta_{i}\}_{i=0}^T \in \mathcal{Z} \subseteq \mathbb{R}^d$ and $\theta \in \Theta \subseteq \mathbb{R}^{d(T+1)}$. To account for infinite horizon decision-making, these MPC problems are usually solved repeatedly for multiple time horizons $T$. We denote the last time step in each instance of MPC by $\tilde{T} \coloneqq T \lceil \frac{t}{T} \rceil$ for any $t \in (0,\infty]$.
We refer the solution of MPC at time $t$ by $\pi_\theta(s_t)$, given as:
\begin{equation}
 \label{eq:actorOptimization}
    \begin{aligned}
      & \underset{\underset{\underset{s_{t+1:\tilde{T}}}{a_{t+1:{\tilde{T}}-1}}}{a_t \in \mathcal{A}}}{\argmax}  \left[\bar{r}(s_t,a_t;\zeta_{\tilde{t}})+ \sum_{t'=t+1}^{\tilde{T}-1} \bar{r}(s_{t'},a_{t'};\zeta_{\tilde{t'}}) +\Bar{R}(s_{\tilde{T}};\zeta_{T})\right] \\ & \text{s. t.} \hspace{0.5cm} g(s_{t'},a_{t'};\zeta_{\tilde{t'}}) \leq 0 \hspace{0.2cm} ; \qquad \quad t' \in [t:\tilde{T}],\\ & \hspace{1cm}
    h(s_{t'},s_{t'+1},a_{t'}; \zeta_{\tilde{t'}}) = 0  \hspace{0.2cm} ; \quad t' \in [t:\tilde{T}],
     \end{aligned}
\end{equation}
where $\tilde{t} = t - T \lfloor \frac{t}{T} \rfloor$ for $\tilde{t} \in [0,T]$, $s_{t+1:\tilde{T}}$ and $a_{t+1:\tilde{T}-1}$ are the decision variables corresponding to the planned states and actions, $\bar{r}:\mathcal{S}\times\mathcal{A}\rightarrow\mathbb{R}$ is the surrogate reward function (not same as true reward $r$), and $\Bar{R}:\mathcal{S}\rightarrow \mathbb{R}$ is the surrogate terminal reward function for the terminal state $s_{\tilde{T}}$.
We assume $\mathcal{Z}$ and $\Theta$ to be compact and convex. 
We refer $\pi_{\theta}(s_t)$ as the solution function \cite{dontchev2009implicit} as it provides the optimal solution to \eqref{eq:actorOptimization}. As this function is generally set-valued \cite{dontchev2009implicit}, we make the following assumption.
\begin{assumption}
\label{asmptn:regularityAssumption}
For each $\theta \in \Theta$ and $s_t \in \mathcal{S}$, the objective function $\bar{r}_t$ in \eqref{eq:actorOptimization} is continuous, strictly convex, $g$ is continuous and convex, and $h$ is affine. Moreover, the feasible set is closed, absolutely bounded, and has a nonempty interior.
\end{assumption}

The above assumption is mild and can be satisfied by imposing proper conditions on the design of the surrogate MPC model by a domain expert. Note that we make no convexity assumption about the true rewards $r$ of the environment. An immediate consequence of the above assumption is that the solution to \eqref{eq:actorOptimization} is unique and continuous with respect to parameters $\theta$ (due to Berge's maximum theorem \cite{berge1997topological}).

\emph{EDS property.} EDS property states that the sensitivity of the solution to an MPC (e.g., \eqref{eq:actorOptimization}) at some time step $t'$ against a parameter perturbation at time step $t$ decays exponentially with $|t-t'|$, where
the sensitivity bounds generally take the following form for some $t, t' \in [0,T]$ and  sensitivity parameters $H\geq1$, $\lambda \in (0,1)$ \cite{shin2022exponential,lin2022bounded}:
\begin{equation}
    \label{eq:EDSsampleEqn}
    \|\pi_{\theta^*}(s_{t'}) - \pi_{\theta'}(s_{t'})\|_2 \leq H\lambda^{|t-t'|}\|\zeta_{t}^*-\zeta_{t}'\|_2,
\end{equation} 
where $\theta^*$ and $\theta'$ denote the list of parameters, which only differ at time step $t$, respectively. The above result shows that the effect of parameter mismatch $\|\zeta_t^*-\zeta_t'\|_2$ at time $t$ on the solution mismatch $\|\pi_{\theta^*}(s_{t'}) - \pi_{\theta'}(s_{t'})\|_2$ at time $t'$ exponentially decays with time difference $|t-t'|$. 

\section{Implicit Actor-Critic (iAC) framework} \label{sec:iAC framework}

\subsection{Actor optimization model}
\label{subsec:ActorModel}
We propose an actor-critic framework \cite{cheng2022adversarially} for offline RL where we consider that the actor's policy comes from the solution of \eqref{eq:actorOptimization}, which we refer to as actor optimization. We name our method ``implicit" because the policy is implicitly
determined by solving an optimization problem. We denote the actor policy class by $\Pi=\{\pi_\theta(\cdot):\theta\in \Theta \subseteq \mathbb{R}^{d(T+1)}\}$. The actor optimization solves a surrogate problem, which is deterministic, and there will be a mismatch between the surrogate model \eqref{eq:actorOptimization} and the true MDP. Therefore, the goal of the offline RL problem is to learn the parameters $\theta$ of the actor optimization model \eqref{eq:actorOptimization} given access to only offline data $\mathcal{D}$, to shape the solution function optimally in order to maximize $J(\pi_\theta)$. We denote such actor parameters by $\theta^*$ and introduce the following definition and assumption.
\begin{definition}[Prediction error]\label{def:predictionError} 
    The $\tau$-step away prediction error for the actor optimization \eqref{eq:actorOptimization} at time step $t$ is defined as $e_{t,\tau} \coloneqq \|\zeta_{t|t+\tau}-\zeta_{t|t+\tau}^*\|_2$ for some integer $\tau\geq0$, where $\zeta_{t|t+\tau}$ denotes the $\tau$-step away parameter from $\tilde{t}$.
\end{definition}
\begin{assumption}
    \label{asmptn:LICQasmptn} The actor optimization satisfies the following regularity conditions at optimal parameter $\theta^*$:\emph{1)} linear independence constraint qualification (LICQ); and \emph{2)} strong second-order sufficiency condition (SSOSC).
\end{assumption}

The assumption above is required to have the EDS property hold in some neighborhood around $\theta^* \in \Theta$, as in optimization problems with inequality constraints, one cannot guarantee EDS to hold globally around $\theta^*$ \cite{shin2022exponential}. It can be satisfied by imposing proper conditions on the design of actor optimization. 
Next, we describe the critic model to learn $\theta^*$.

\subsection{Critic for Q-function approximation}
\label{subsec:CriticModel}
We use the optimal value from the actor optimization at $t=0$ to approximate the Q-value at state $s_0=s$ for action $a$. We propose a reward warping function (RWF) that takes as input the optimal value from the actor given by: 
\begin{equation}
    \begin{aligned}
     & \phi(s,a;\theta') =\underset{s_{0:T},a_{0:T-1}}{\max}\Bigg[ \sum_{t'=0}^{T-1} \bar{r}(s_{t'},a_{t'};\zeta_{t'}') + \Bar{R}(s_T;\zeta_{T}')\Bigg] \\ & \text{s. t.} \hspace{0.5cm} g(s_{t'},a_{t'};\zeta_{t'}') \leq 0 \hspace{0.2cm} ; \qquad \quad t' \in [0:T],\\ & \hspace{1cm}
    h(s_{t'},s_{t'+1},a_{t'}; \zeta_{t'}') = 0  \hspace{0.2cm} ; \quad t' \in [0:T],\\ & \hspace{1cm}s_0 = s, a_0 = a.
     \end{aligned}
     \label{eq:CriticOptimization}
\end{equation}
The critic's output is given as $f(s,a;\omega,\theta') \coloneqq \psi\big(\phi(s,a;\theta');\omega \big)$,
where $\psi(\cdot;\omega):\mathbb{R}\rightarrow\mathbb{R}$ is the RWF parameterized by some parameters $\omega \in \Omega$. Note that $\theta'$ above may not be the same as the actor's parameter $\theta$. We denote the the critic function class by $\mathcal{F}\coloneqq \{f(\cdot,\cdot;\omega,\theta'): \theta' \in \mathbb{R}^{d(T+1)}, \omega \in \Omega\}$. We denote the maximum value of critic with $\psi_{max} \coloneqq \max_{(s,a)}\psi(s,a)$. For notational simplicity, we denote $f(s,a;\omega,\theta')$ as $f_\omega^{\theta'}(s,a)$. 

The design of RWF should facilitate the prediction of the true rewards-to-go while preserving some properties of the underlying optimization. 
We consider the RWF a \emph{monotonically increasing function} as they can have better statistical properties compared to non-monotone counterparts due to the structural prior \cite{monotoneNN}. We introduce the following assumptions for the critic function class $\mathcal{F}$.
\begin{assumption}
    \label{asm:PsiLipschitz}
    We assume that the monotone function $\psi:\mathbb{R} \rightarrow \mathbb{R}$ is Lipschitz with respect to the actions, i.e., 
        $|f_\omega^\theta(s,a_2) - f_\omega^\theta(s,a_1)| \leq L_{\psi}\|a_2 - a_1\|_2$ for all $s \in \mathcal{S}$.
\end{assumption}
\begin{assumption}\label{asmptn:realizability} For any $\pi \in \Pi$, we assume approximate realizability for $f_\omega^\theta\in \mathcal{F}$, i.e., $\min_{f_\omega^\theta\in \mathcal{F}}\max_{\nu} \|f_\omega^\theta-\mathcal{T}^\pi f_\omega^\theta\|_{2,\nu}^2 \leq \epsilon_{\mathcal{F}}$, where $\nu \in \{d^{\pi}: \forall \pi \in \Pi\}$. 
\end{assumption}
 Intuitively, $\epsilon_{\mathcal{F}}$ is the maximum approximation error that the critic function class $f_\omega^\theta \in \mathcal{F}$ will have to approximate some $Q^{\pi_\theta}$.
Assumptions \ref{asm:PsiLipschitz} and \ref{asmptn:realizability} are mild as the function $\psi$ can be designed to ensure the above property holds.

\subsection{Offline iAC with relative pessimism}
\label{subsec:iACwithRelativePessimism}
As the EDS conditions only hold around a neighborhood of $\theta^*$, we could only have its benefits inside a local region. Therefore, we expand the region of robustness to overestimation of the value functions by combining iAC with RP \cite{cheng2022adversarially}, which enables iAC to address partial coverage while ensuring robust policy improvement over any behavior policy. 
RP formulates offline RL as a Stackelberg game with the actor as the leader and the critic as the follower \cite{cheng2022adversarially}: 
\begin{equation}
    \begin{aligned}\label{eq:ATAC1}
        &\hat{\theta} \in \argmax_{\theta \in \Theta} \mathcal{L}_{\mu}\left(\theta, \hat{\omega}\right) \\ \text{s.t.} \quad & \hat{\omega} \in \argmin_{\omega\in\Omega} \mathcal{L}_{\mu}\left(\theta,\omega\right) + \beta\mathcal{E}_\mu(\theta,\omega),
    \end{aligned}
\end{equation}
where $\beta \geq 0$ is a hyperparameter, and
\begin{equation}\label{eq:ATAC2}
    \mathcal{L}_\mu(\theta,\omega) \coloneqq \mathbb{E}_{(s,a)\sim\mu}\left[f^{\theta}_\omega(s,\pi_\theta(s)) - f^{\theta}_\omega(s,a)\right],
\end{equation}
\begin{equation}\label{eq:ATAC3}
    \mathcal{E}_{\mu}(\theta,\omega) \coloneqq \mathbb{E}_{(s,a)\sim\mu}\left[\left((f_\omega^\theta - \mathcal{T}^{\pi_\theta}f_\omega^\theta)(s,a)\right)^2\right].
\end{equation}
Here, $\pi_{\hat{\theta}}$ maximizes the relative pessimistic policy evaluation of $f^{\theta}_\omega$ with respect to the behavior policy distribution $\mu$. The term $\mathcal{E}_\mu(\theta,\omega)$ ensures Bellman consistency on data for $f^{\theta}_\omega$, $\mathcal{L}_\mu(\theta,\omega)$ encodes pessimism in the updates, $\beta$ balances the relative contributions of $\mathcal{E}_\mu(\theta,\omega)$ and $\mathcal{L}_\mu(\theta,\omega)$. 

\subsection{iAC algorithm and robustness of the actor} 
We introduce the overall iAC algorithm with RP. First, we initialize actor policy with parameters $\theta^{(0)} = \{\zeta_0^{(0)},  \cdots, \zeta_T^{(0)}\}$, where the superscript denotes the training iteration. At iteration, $k\in[0:K]$, the pessimistic estimate of the critic is obtained as 
\begin{equation}
\label{eq:criticUpdate}
    \omega^{(k+1)} \leftarrow \argmin_{\omega \in \Omega}\mathcal{L}_{\mathcal{D}}(\theta^{(k)},\omega) + \beta \mathcal{E}_\mathcal{D}^{\omega^{(k)}}(\theta^{(k)},\omega),
\end{equation}
and then, the actor parameters are updated as
\begin{equation}
    \label{eq:actorUpdate}
    \theta^{(k+1)} \leftarrow \argmax_{\theta \in \Theta} \mathcal{L}_{\mathcal{D}}(\theta,\omega^{(k+1)}),
\end{equation}
where $\mathcal{L}_{\mathcal{D}}(\theta,\omega) \coloneqq \mathbb{E}_{\mathcal{D}}[f_\omega^\theta(s,\pi_\theta(s)) - f_\omega^\theta(s,a)]$ and $\mathcal{E}_{\mathcal{D}}^{\omega^{(k)}}(\theta,\omega) \coloneqq [f_\omega^\theta(s,a) - r - \gamma f_{\omega^{(k)}}^\theta(s',\pi_\theta(s))]^2$ are the empirical estimates of $\mathcal{E}_\mu$ and $\mathcal{L}_\mu$, respectively. The next proposition shows how the learned actor is robust to the learned parameter mismatch after $K$ training iterations.
\begin{proposition}[\cite{lin2022bounded}]
    \label{prop:RobustActor}
    Let $e_{t,\tau}^{(K)}$ denote the prediction error at iteration $K$ at some time step $t$. If Assumptions \ref{asmptn:SmoothMDP}-\ref{asmptn:LICQasmptn} hold, and if there exist constants $H\geq 1$, $C_1>0$, and $\lambda \in (0,1)$ such that: $\sum_{\tau=0}^{T-t}\lambda^\tau e_{t,\tau}^{(K)}\leq \left(\frac{(1-\lambda)^2}{H^3 L_P} - 2C_1\lambda^{T-t}\right)$ for all $t\in [0:T]$, then the following holds $\forall s_0 \in \mathcal{S}$:
    \begin{equation*}
\begin{aligned}
\|\pi_{\theta^*}(s_0) - \pi_{\theta^{(K)}}(s_0)\|_2\leq\mathcal{O}\left(\sum_{t=0}^T\lambda^tH\|\zeta_t^* - \zeta_t^{(K)}\|_2  \right).
\end{aligned}
\end{equation*}
\end{proposition}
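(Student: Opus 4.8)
The plan is to build the simultaneous-perturbation bound out of the single-block sensitivity estimate \eqref{eq:EDSsampleEqn} by a hybrid (telescoping) argument. Since $\theta^*$ and $\theta^{(K)}$ differ in all $T+1$ parameter blocks $\zeta_0,\dots,\zeta_T$, I would interpolate between them one block at a time: define intermediate parameter lists $\vartheta^{(0)},\dots,\vartheta^{(T+1)}$ with $\vartheta^{(0)}=\theta^*$ and $\vartheta^{(T+1)}=\theta^{(K)}$, where $\vartheta^{(j)}$ carries the learned blocks $\zeta^{(K)}_0,\dots,\zeta^{(K)}_{j-1}$ and the optimal blocks $\zeta^*_j,\dots,\zeta^*_T$. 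Consecutive lists $\vartheta^{(j)}$ and $\vartheta^{(j+1)}$ then differ only in the single block at time step $j$, which is exactly the setting in which \eqref{eq:EDSsampleEqn} applies.

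First I would apply the triangle inequality along this path to get
\[
\|\pi_{\theta^*}(s_0)-\pi_{\theta^{(K)}}(s_0)\|_2 \le \sum_{j=0}^{T}\|\pi_{\vartheta^{(j)}}(s_0)-\pi_{\vartheta^{(j+1)}}(s_0)\|_2 .
\]
For each summand, because the two lists differ only at time step $j$, I would invoke the EDS estimate \eqref{eq:EDSsampleEqn} with $t=j$ and $t'=0$, which bounds the term by $H\lambda^{j}\|\zeta^*_j-\zeta^{(K)}_j\|_2$. Summing over $j$ yields exactly $\mathcal{O}\!\big(\sum_{t=0}^{T}\lambda^t H\|\zeta^*_t-\zeta^{(K)}_t\|_2\big)$, the claimed bound; the uniqueness and continuity of the solution map guaranteed by Assumption \ref{asmptn:regularityAssumption} ensure each $\pi_{\vartheta^{(j)}}(s_0)$ is well defined so the telescoping is legitimate.

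The hard part is not the telescoping itself but justifying that \eqref{eq:EDSsampleEqn} may be applied with the \emph{same} constants $H,\lambda$ at every intermediate base point $\vartheta^{(j)}$. As emphasized around \eqref{eq:EDSsampleEqn} and in Assumption \ref{asmptn:LICQasmptn}, the EDS estimate is only local, holding on a neighborhood of $\theta^*$ over which LICQ and SSOSC persist; but the intermediate lists $\vartheta^{(j)}$ differ from $\theta^*$ in several blocks at once, so I must show the entire hybrid path stays inside this neighborhood with uniform sensitivity constants. This is precisely the role of the hypothesis $\sum_{\tau=0}^{T-t}\lambda^\tau e^{(K)}_{t,\tau}\le \frac{(1-\lambda)^2}{H^3 L_P}-2C_1\lambda^{T-t}$: its left side is the $\lambda$-weighted cumulative prediction error of the learned parameters from $\theta^*$ (Definition \ref{def:predictionError}), and its right side is a radius of validity set by the sensitivity constants $H,\lambda$ and the MDP smoothness $L_P$ of Assumption \ref{asmptn:SmoothMDP}, with $2C_1\lambda^{T-t}$ accounting for the terminal, finite-horizon boundary sensitivity. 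I would verify that this inequality keeps every $\vartheta^{(j)}$ within the region where the KKT system of \eqref{eq:actorOptimization} remains nonsingular, so that a uniform, neighborhood-wide version of EDS applies along the whole path.

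Finally, the quantitative control of that neighborhood — extracting $H,\lambda$ from the banded KKT structure induced by the temporal coupling of the dynamics constraints $h$, and propagating it through the $L_P$-Lipschitz dynamics — is exactly the bounded-sensitivity analysis of \cite{lin2022bounded}; I would import that machinery and only re-derive the parts affected by the receding-horizon and reward-warping structure specific to \eqref{eq:actorOptimization}. Since the coupling through $h$ is what makes a naive block-wise estimate fail and is precisely what EDS is designed to tame, I expect verifying uniformity of the sensitivity constants along the hybrid path, rather than the superposition step, to be the main technical obstacle.
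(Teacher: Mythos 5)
The paper never proves this proposition itself: it is imported verbatim from \cite[Thm.~5.1]{lin2022bounded} (note the citation in the proposition header), and the entire justification in the text is the sentence stating that the result follows from that theorem. So the comparison here is between your reconstruction and a pure citation. Your skeleton --- telescoping through hybrid parameter lists $\vartheta^{(0)},\dots,\vartheta^{(T+1)}$ that swap one block $\zeta_j$ at a time, applying the single-block bound \eqref{eq:EDSsampleEqn} with $t'=0$ to each consecutive pair, and summing --- is the right superposition structure, and it does reflect how results of this type are established in the sensitivity-analysis literature the paper leans on. You also correctly isolate the genuine crux: \eqref{eq:EDSsampleEqn} as stated requires one endpoint of the pair to be $\theta^*$, whereas your intermediate lists differ from $\theta^*$ in several blocks, so EDS must hold with \emph{uniform} constants $H,\lambda$ on a neighborhood containing the whole hybrid path; Assumption \ref{asmptn:LICQasmptn} gives LICQ/SSOSC only at $\theta^*$, and the hypothesis $\sum_{\tau=0}^{T-t}\lambda^\tau e^{(K)}_{t,\tau}\leq (1-\lambda)^2/(H^3L_P)-2C_1\lambda^{T-t}$ is precisely the smallness condition that keeps the perturbed parameters inside the region where that regularity (and hence the sensitivity bound) persists. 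That reading of the hypothesis is faithful to its role in \cite{lin2022bounded}.

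The caveat is that your proposal is a strategy rather than a self-contained argument: the step you flag as the main obstacle --- verifying uniformity of $H,\lambda$ along the path from the banded KKT structure --- is exactly the content of the cited theorem, and you explicitly defer it to that paper's machinery. Nothing is wrong with this, but be aware that it puts you in the same position as the authors: the proposition's validity in this paper rests entirely on \cite[Thm.~5.1]{lin2022bounded} applying to the actor optimization \eqref{eq:actorOptimization} (with its inequality constraints $g$ and receding-horizon reparametrization $\tilde t$), which neither the paper nor your sketch actually verifies. What your write-up adds over the paper is pedagogical: it makes explicit the telescoping decomposition and the exact point at which the prediction-error hypothesis is consumed, both of which the paper leaves implicit behind the citation.
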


The above proposition follows from \cite[Thm. 5.1]{lin2022bounded}. It states that for optimization-based policies, under bounded prediction errors $e_{t,\tau}^{K}$, the EDS property holds, i.e., the behavior mismatch of the learned policy ($\|\pi_{\theta^*}(s_0) - \pi_{\theta^{(K)}}(s_0)\|_2$) has an exponentially decaying dependence on the parameter mismatch later in the time horizon. This is in contrast to the general function approximation settings, where each parameter mismatch contributes to the suboptimality of the learned policy without any exponentially decaying weighting. To show this, we consider $\theta^{gen}\in \Theta^{gen} \subseteq \mathbb{R}^{d(T+1)}$ to denote the policy parameter for some general function approximation for the actor policy learned after $K$ iterations. We assume that the policy $\pi_{\theta^{gen}}(s)$ is Lipschitz continuous in $\theta^{gen}$. 
\begin{lemma}
    \label{lem:CompGenFuncApprox}
    Let $$Z = \left(\max_{i\in[d(T+1)]}|(\theta^* -\theta^{gen})_i|, \max_{i\in[d(T+1)]}|(\theta^* - \theta^{iAC})_i| \right),$$ denote the maximum scalar entry in both parameter mismatch vectors. If Proposition \ref{prop:RobustActor} holds, then for all $s \in \mathcal{S}$
    \begin{equation}
    \label{eq:GenFuncApprox}
        \|\pi_{\theta^*}(s) - \pi_{\theta^{gen}}(s)\|_2 \leq \mathcal{O}\left(Z\sqrt{d(T+1)}\right),
        \end{equation}
        \begin{equation}
        \label{eq:iACrobust}
        \|\pi_{\theta^*}(s) - \pi_{\theta^{iAC}}(s)\|_2 \leq \mathcal{O}\left(Z\sqrt{d}\frac{(1-\lambda^{T+1})}{1-\lambda}\right),
    \end{equation}
    where $\lambda \in (0,1)$ is the constant from Proposition \ref{prop:RobustActor}.
\end{lemma}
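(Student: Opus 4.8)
The plan is to prove each inequality separately by reducing the policy mismatch to the parameter mismatch and then bounding the latter in terms of $Z$. The key structural distinction I want to expose is that the general approximation bound aggregates all $d(T+1)$ parameter coordinates with equal weight, whereas the iAC bound inherits the exponentially decaying weighting from Proposition~\ref{prop:RobustActor}. Throughout I identify the learned iAC parameters with the $K$-th iterate, i.e.\ $\theta^{iAC} = \theta^{(K)}$, so that the proposition applies directly.

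For the general function approximation bound \eqref{eq:GenFuncApprox}, I would start from the assumed Lipschitz continuity of $\pi_{\theta^{gen}}(s)$ in its parameter, which gives $\|\pi_{\theta^*}(s) - \pi_{\theta^{gen}}(s)\|_2 \leq L\|\theta^* - \theta^{gen}\|_2$ for some constant $L$ absorbed into the $\mathcal{O}(\cdot)$. Since $\theta^* - \theta^{gen} \in \mathbb{R}^{d(T+1)}$ and each of its scalar entries is at most $Z$ in absolute value by the definition of $Z$, I would bound $\|\theta^* - \theta^{gen}\|_2 = \sqrt{\sum_{i=1}^{d(T+1)}((\theta^* - \theta^{gen})_i)^2} \leq \sqrt{d(T+1)\,Z^2} = Z\sqrt{d(T+1)}$, which yields \eqref{eq:GenFuncApprox} immediately.

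For the iAC bound \eqref{eq:iACrobust}, I would invoke Proposition~\ref{prop:RobustActor} to obtain $\|\pi_{\theta^*}(s) - \pi_{\theta^{iAC}}(s)\|_2 \leq \mathcal{O}(\sum_{t=0}^T \lambda^t H\|\zeta_t^* - \zeta_t^{iAC}\|_2)$. Recalling that each block $\zeta_t$ lives in $\mathbb{R}^d$, I would bound each per-block mismatch $\|\zeta_t^* - \zeta_t^{iAC}\|_2 \leq \sqrt{d}\,Z$ exactly as above, since every scalar entry is at most $Z$. Substituting and pulling the constants $H$ and $\sqrt{d}\,Z$ outside the sum leaves the geometric series $\sum_{t=0}^T \lambda^t = \frac{1-\lambda^{T+1}}{1-\lambda}$, giving $\|\pi_{\theta^*}(s) - \pi_{\theta^{iAC}}(s)\|_2 \leq \mathcal{O}(Z\sqrt{d}\,\frac{1-\lambda^{T+1}}{1-\lambda})$, which is \eqref{eq:iACrobust}.

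I do not anticipate a genuine analytic obstacle here; the argument is essentially dimensional bookkeeping layered on top of Proposition~\ref{prop:RobustActor}. The one point requiring care is precisely that accounting: the general bound collapses the full $d(T+1)$-dimensional mismatch into a single $\ell_2$ norm (hence the $\sqrt{d(T+1)}$ factor), whereas the iAC bound must keep the per-time-step blocks \emph{separate} so that the EDS weights $\lambda^t$ act on each $d$-dimensional block independently (hence $\sqrt{d}$ times the summed geometric factor). Getting this separation right is what produces the qualitative improvement, since for large $T$ the factor $\frac{1-\lambda^{T+1}}{1-\lambda}$ remains bounded by $\frac{1}{1-\lambda}$ while $\sqrt{T+1}$ diverges.
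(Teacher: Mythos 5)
Your proposal is correct and follows essentially the same route as the paper's own proof: Lipschitz continuity in the parameter plus the entry-wise bound $\|\theta^*-\theta^{gen}\|_2 \le Z\sqrt{d(T+1)}$ for the general case, and Proposition~\ref{prop:RobustActor} with per-block bounds $\|\zeta_t^*-\zeta_t^{iAC}\|_2 \le Z\sqrt{d}$ summed as a geometric series for the iAC case. The only addition beyond the paper is your (correct) closing remark contrasting $\frac{1-\lambda^{T+1}}{1-\lambda} \le \frac{1}{1-\lambda}$ with the divergent $\sqrt{T+1}$, which the paper states in the discussion rather than in the proof itself.
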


 The above lemma shows that under a longer horizon $T$ and large exponential decay parameter $\lambda$, the policy mismatch for iAC has a tighter upper bound with respect to the parameter mismatch compared to the general function approximation settings, highlighting the robustness of the actor in iAC.
\section{Theoretical analysis for iAC}
\label{sec:TheoreticalAnalysis}
In this section, we provide theoretical guarantees on the performance of iAC. 
We introduce the following definition.

\begin{definition}[\cite{xie2021bellman}]
    Distribution shift coefficient (DSC) $\mathcal{C}(\nu;\mu,\mathcal{F},\pi) \coloneqq \underset{f \in \mathcal{F}}{\max}\frac{\|f - \mathcal{T}^{\pi} f\|_{2,\nu}^2}{\|f - \mathcal{T}^{\pi} f\|_{2,\mu}^2}$ measures the distribution shift from an arbitrary distribution $\nu$ to the distribution $\mu$ with respect to the function class $\mathcal{F}$ and policy $\pi$.
\end{definition}

DSC measures how well the behavior policy distribution $\mu$ covers the distribution of interest $\nu$ with respect to the function class $\mathcal{F}$. 
We introduce the following assumption.
\begin{assumption}
    \label{asmptn:boundedC}
    For some $C^*> 0$, the following holds 
     $\max_{k\in [0:K]}\mathcal{C}(\nu^*;\mu,\mathcal{F},\pi_{\theta^{(k)}})\leq C^*$. 
\end{assumption}

The assumption above implies that the distribution $\nu^*$ is well-covered by $\mu$ under $\mathcal{F}$ and any $\pi_{\theta^{(k)}}$. This is a much weaker assumption than the boundedness of the all-policy concentrability ratio, which assumes the boundedness for all policies $\pi \in \Pi$ (see, e.g., \cite{antos2007fitted}). We use $S_{\mathcal{F},\Pi}\coloneqq \frac{\log \left(\mathcal{N}_1(\mathcal{F}, \epsilon)\mathcal{N}_1(\Pi, \epsilon)\right)}{\delta}$ to denote the joint statistical complexity of the actor class $\Pi$ and critic class $\mathcal{F}$, where $\delta\in(0,1)$ is some probability, and $\mathcal{N}_1(\mathcal{F},\epsilon)$, $\mathcal{N}_1(\Pi,\epsilon)$  denote the $\mathcal{L}_1$-covering numbers at precision $\epsilon$ (see \cite{kakade2008complexity}). 

\subsection{Performance guarantee for iAC}
\label{subsec:PerfGuarantee}
For the performance guarantee, we obtain an upper bound on $J(\pi_{\theta^*}) - J(\pi_{\theta^{(K)}})$, which can be decomposed as \cite{cheng2022adversarially}:
\begin{equation*}
\small
\begin{aligned}
    J(\pi_{\theta^*}) - J(\pi_{\theta^{(K)}}) &= \frac{1}{1-\gamma}\Bigg[\underbrace{\mathbb{E}_\mu\left[\left(f_K - \mathcal{T}^{\pi_{\theta^{(K)}}}f_K\right)(s,a)\right]}_{\text{(A)}}  \\ &+\underbrace{\mathbb{E}_{d^{\pi_{\theta^*}}}\left[\left(\mathcal{T}^{\pi_{\theta^{(K)}}}f_K - f_K\right)(s,a)\right]}_\text{(B)} \\&+\underbrace{\mathbb{E}_{d^{\pi_{\theta^*}}}\left[f_K(s,\pi_{\theta^*}(s)) - f_K(s,\pi_{\theta^{(K)}}(s))\right]}_\text{(C)} \\ &+ \underbrace{\mathcal{L}_\mu(\theta^{(K)},\omega^{(K)}) -\mathcal{L}_\mu(\theta^{(K)},Q^{\pi_{\theta^{(K)}}})}_\text{(D)}\Bigg],
    \end{aligned}
\end{equation*}
where we denote $f_{\omega^{(K)}}^{\theta^{(K)}}$ as $f_K$ for notational simplicity. Terms (A) and (B) are the average Bellman errors for $f_K$ and $\pi_{\theta^{(K)}}$, w.r.t. $\mu$ and $d^{\pi_{\theta^*}}$; term (D) is the difference between true and estimated relative pessimistic Q-values. Terms (A), (B), and (D) can be upper bounded same way as in \cite[Thm. 14]{cheng2022adversarially}. 

Term (C) denotes the optimization regret incurred during the actor update \eqref{eq:actorUpdate}.
In \cite{cheng2022adversarially}, authors rely on a no-regret policy optimization oracle to achieve $o(K)$ regret on (C) over $K$ iterations, which is hard to achieve for general function approximation settings. For iAC, $o(K)$ regret on the term (C) can be assured by establishing that the critic class $\mathcal{F}$ is piecewise smooth in $\theta$. Many works have achieved $o(K)$ regret for online learning on piecewise smooth functions \cite{balcan2018dispersion}. The piecewise-smooth property for critic follows from two points: \emph{1)} the output of a monotone function applied to the optimal value is equal to the optimal value of the optimization where the monotone function is applied to the objective function (Lemma 2 in appendix) which makes the function class $\mathcal{F}$ a set of optimal-value mappings; \emph{2)} multiparametric (mp) MPC problems can be approximated by multi-stage (mp-quadratic programs (mp-QP)) or mp-convex problems \cite{kouramas2011explicit}, and it is known that the optimal values of mp-QP/convex problems are piecewise smooth w.r.t. the optimization parameters \cite{kouramas2011explicit}. 

We further show that the upper bound on (C), in terms of learned parameter mismatch, improves due to the EDS property. We present the final performance guarantee for the iAC for the case when the actor optimization \eqref{eq:actorOptimization} is represented with a linear surrogate reward. Then \eqref{eq:actorOptimization} can be represented as a multi-stage mp-LP with $n_{\mathcal{I}}$ and $n_{\mathcal{E}}$ inequality and equality constraints, respectively, at each time step \cite{kouramas2011explicit}. We assume $\epsilon_{\mathcal{F}} = 0$ for simplicity.
\begin{theorem}
\label{thm:iACguarantee}
    Under Assumptions \ref{asmptn:SmoothMDP}- \ref{asmptn:boundedC}, if $\beta = \mathcal{O}\left(\frac{\psi_{max}^{1/3}n^{2/3}\delta^{2/3}}{C_{\mathcal{F},\Pi}^{2/3}} \right)$, then with probability of at least $1-\delta$ 
    \begin{equation*}
    \begin{aligned}
        J(\pi_{\theta^*}) - J(\pi_{\theta^{(K)}}) &\leq \mathcal{O}\left(\frac{Z \sqrt{d}(1-\lambda^{T+1})}{(1-\lambda)(1-\gamma)}\right)\\ &+\mathcal{O}\left( \frac{\psi_{max}\sqrt{C^*}\left(C_{\mathcal{F},\Pi}^{1/3} \right)}{(1-\gamma)n^{1/3} \delta^{1/3}} \right),
        \end{aligned}
    \end{equation*}
where $C_{\mathcal{F},\Pi} = \sum_{0\leq i\leq (T+1)(n_a - n_{\mathcal{E}})}\begin{pmatrix}(T+1)n_{\mathcal{I}}\\i\end{pmatrix}$ denotes the constant term in joint statistical complexity $S_{\mathcal{F},\Pi}$.
\end{theorem}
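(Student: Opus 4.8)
The plan is to bound the two natural groups in the displayed decomposition of $J(\pi_{\theta^*}) - J(\pi_{\theta^{(K)}})$ separately, so that the optimization-regret term $(C)$ produces the first (EDS) summand and the statistical group $\{(A),(B),(D)\}$ produces the second. For $(C)$ I would first use the action-Lipschitz property of the critic (Assumption \ref{asm:PsiLipschitz}) to bound its integrand by $L_\psi\|\pi_{\theta^*}(s)-\pi_{\theta^{(K)}}(s)\|_2$, and then invoke the robustness bound \eqref{eq:iACrobust} of Lemma \ref{lem:CompGenFuncApprox} (which rests on the EDS statement of Proposition \ref{prop:RobustActor}) to replace the policy mismatch by $\mathcal{O}(Z\sqrt{d}\,(1-\lambda^{T+1})/(1-\lambda))$; taking the $d^{\pi_{\theta^*}}$-expectation and carrying the $1/(1-\gamma)$ factor then gives the first summand verbatim. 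The no-regret discussion is what licenses this step: I would establish that the critic class $\mathcal{F}$ is piecewise smooth in $\theta$ — combining the appendix identity that a monotone map commutes with the optimal-value operator with the fact that optimal-value maps of multi-stage mp-LPs are piecewise affine in their parameters \cite{kouramas2011explicit} — and then apply a dispersion-based online-learning guarantee \cite{balcan2018dispersion} to certify $o(K)$ regret on the actor update \eqref{eq:actorUpdate}, which keeps $\theta^{(K)}$ inside the EDS-admissible neighborhood of $\theta^*$ where Proposition \ref{prop:RobustActor} applies.

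For the statistical group I would follow the template of \cite[Thm.\ 14]{cheng2022adversarially}. Relative pessimism (the sign structure of $\mathcal{L}_\mu$) keeps the pessimism gap $(D)$ controllable; the Bellman-consistency regularizer with weight $\beta$ controls the in-distribution error $(A)$; and the off-distribution error $(B)$, living under $d^{\pi_{\theta^*}}$, is transferred to the data distribution $\mu$ through the distribution-shift coefficient and bounded by $\sqrt{C^*}$ via Assumption \ref{asmptn:boundedC}. The only pieces not inherited verbatim are the empirical-to-population deviations of $\mathcal{L}_\mathcal{D}$ and $\mathcal{E}_\mathcal{D}$, which I would control by a uniform-convergence argument over $\mathcal{F}\times\Pi$ at the joint complexity scale $S_{\mathcal{F},\Pi}$ and magnitude $\psi_{max}$, and the explicit evaluation of that complexity for the implicit policy class.

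That evaluation is where the mp-LP structure enters. Under the linear-surrogate assumption, \eqref{eq:actorOptimization} and \eqref{eq:CriticOptimization} become multi-stage mp-LPs whose solution and optimal-value maps are piecewise affine, with critical regions indexed by optimal active sets; counting active sets of size at most the number of free variables $(T+1)(n_a-n_{\mathcal{E}})$ among $(T+1)n_{\mathcal{I}}$ inequalities yields the stated combinatorial constant $C_{\mathcal{F},\Pi}$, which upper-bounds the covering number of $\mathcal{F}$ and hence $S_{\mathcal{F},\Pi}$. Substituting this into the concentration bounds for $\{(A),(B),(D)\}$ leaves an expression in $\beta,\psi_{max},C^*,C_{\mathcal{F},\Pi},n,\delta$; minimizing over $\beta$ produces the stated choice $\beta=\mathcal{O}(\psi_{max}^{1/3}n^{2/3}\delta^{2/3}/C_{\mathcal{F},\Pi}^{2/3})$ and the second summand with its $n^{-1/3}$ rate. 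Summing the two groups completes the bound.

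The hard part is the complexity accounting rather than the decomposition. Two points need care: proving that $\mathcal{F}$ is piecewise affine \emph{in the parameters $\theta$} (not merely in the decision variables), so that the appendix commutation identity and the mp-LP parametric theory genuinely apply to the critic; and turning the critical-region count into a covering-number bound tight enough to surface as $C_{\mathcal{F},\Pi}^{1/3}$ after the $\beta$-optimization. A second, subtler obstacle is the compatibility of the two neighborhoods: the dispersion/no-regret argument controlling $(C)$ must leave the iterate inside the region where the prediction-error precondition of Proposition \ref{prop:RobustActor} holds along the whole horizon, since otherwise the $(1-\lambda^{T+1})/(1-\lambda)$ weighting cannot be asserted.
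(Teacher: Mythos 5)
Your proposal follows essentially the same route as the paper's own proof: the identical four-term decomposition, term (C) bounded first by the critic's action-Lipschitzness (Assumption \ref{asm:PsiLipschitz}) and then by the EDS bound \eqref{eq:iACrobust} of Lemma \ref{lem:CompGenFuncApprox}, terms (A), (B), (D) inherited from \cite[Thm.~14]{cheng2022adversarially} with $\sqrt{C^*}$ entering through Assumption \ref{asmptn:boundedC}, the statistical complexity $S_{\mathcal{F},\Pi}$ evaluated via the mp-LP critical-region count $C_{\mathcal{F},\Pi}$, and the final choice of $\beta$ balancing the resulting terms to yield the $n^{-1/3}$ rate. The two ``hard parts'' you flag (piecewise smoothness of $\mathcal{F}$ in $\theta$ and keeping $\theta^{(K)}$ inside the neighborhood where Proposition \ref{prop:RobustActor} applies) are indeed points the paper handles only at the level of discussion preceding the theorem rather than inside the proof itself, so your account is consistent with, and if anything more explicit than, the paper's argument.
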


The first term in the RHS provides an improved upper bound on the term (C), highlighting the benefit of the EDS parameter $\lambda$ and an extended prediction horizon $T$. In standard function approximation, this term reduces to $\mathcal{O}\left(\frac{Z\sqrt{d(T+1)}}{(1-\gamma)} \right)$. The $n^{1/3}$ dependence is attributed to the squared Bellman error regularizer in RP. A lower DSC $C^*$ means $d^{\pi_{\theta^*}}$ is well-covered by $\mu$, leading to better performance. The constant $C_{\mathcal{F},\Pi}$ denotes the maximum number of critical regions in reformulated multi-stage mp-LP for actor optimization with linear rewards and constraints \cite{kouramas2011explicit}. We assume $n_a - n_{\mathcal{E}}\geq 0$ for the reformulated mp-LP. As $n_{\mathcal{I}}$ and $T$ increase, $C_{\mathcal{F},\Pi}$ increases. However, increasing $n_{\mathcal{I}}$ also improves the approximation power of $\Pi$ and $\mathcal{F}$ \cite{jin2023solution}, leading to a lower realizability error $\epsilon_{\mathcal{F}} = \mathcal{O}\left( \frac{n_s}{(n_{\mathcal{I}})^{2/n_s}}\right)$ (Lemma 3 in appendix). Therefore, there is a constant tradeoff between the first and second terms in the upper bound through $n_{\mathcal{I}}$ and $T$.




\section{Experiments}
\label{sec:experiments}
\subsection{Building energy management---CityLearn challenge}
We consider an offline version from the CityLearn challenge \cite{vazquez2020citylearn}, which consists of controlling the energy storage of multiple buildings to minimize costs such as ramping costs and peak demands. The state space consists of 30 continuous states (e.g., electric demands, outdoor temperature), and actions include charging/discharging of storage devices for hot water, cold water, and electricity for 9 buildings. 
For experiments, we collect hourly data for 3 months using a rule-based control (RBC), a default controller in the CityLearn environment.
We use the per-step reward of $e_t^3$ as done in \cite{khattar2023winning}, where $e_t$ is the net electricity consumption. 

We formulate the actor optimization with a $T=24$ hour planning horizon; the surrogate reward is designed as $\bar{r}(s_t;\zeta) = - |e_t - e_{t-1}| - \zeta_t e_t$, which consists of the sum of the negated ramping cost and ``virtual" electricity cost, where $\theta = \{\zeta_0,\cdots,\zeta_{24}\}$ and $\zeta_t\in [0,5] \hspace{0.1cm}\forall t \in [0:24]$ are the virtual electricity prices to be learned in order to encourage desirable consumption patterns. We design the RWF for the critic as $\psi(\phi;\omega) = \omega_1\phi+ \omega_2$, where $\phi$ is the optimal value from the actor optimization.

We compare iAC with other famous offline RL baselines: BEAR \cite{kumar2019stabilizing}, AWAC \cite{nair2020awac}, CQL \cite{kumar2020conservative}, TD3+BC  \cite{fujimoto2021minimalist}, and BCQ \cite{fujimoto2019off}. We consider 3 versions of iAC: \emph{1)} iAC* (without RP), \emph{2)} iAC+RP (with RP), \emph{3)} W-iAC (training iAC with RP and by weighting current parameters more than the later ones). 
\begin{figure}[t]
  \centering
\includegraphics[width=0.8\columnwidth]{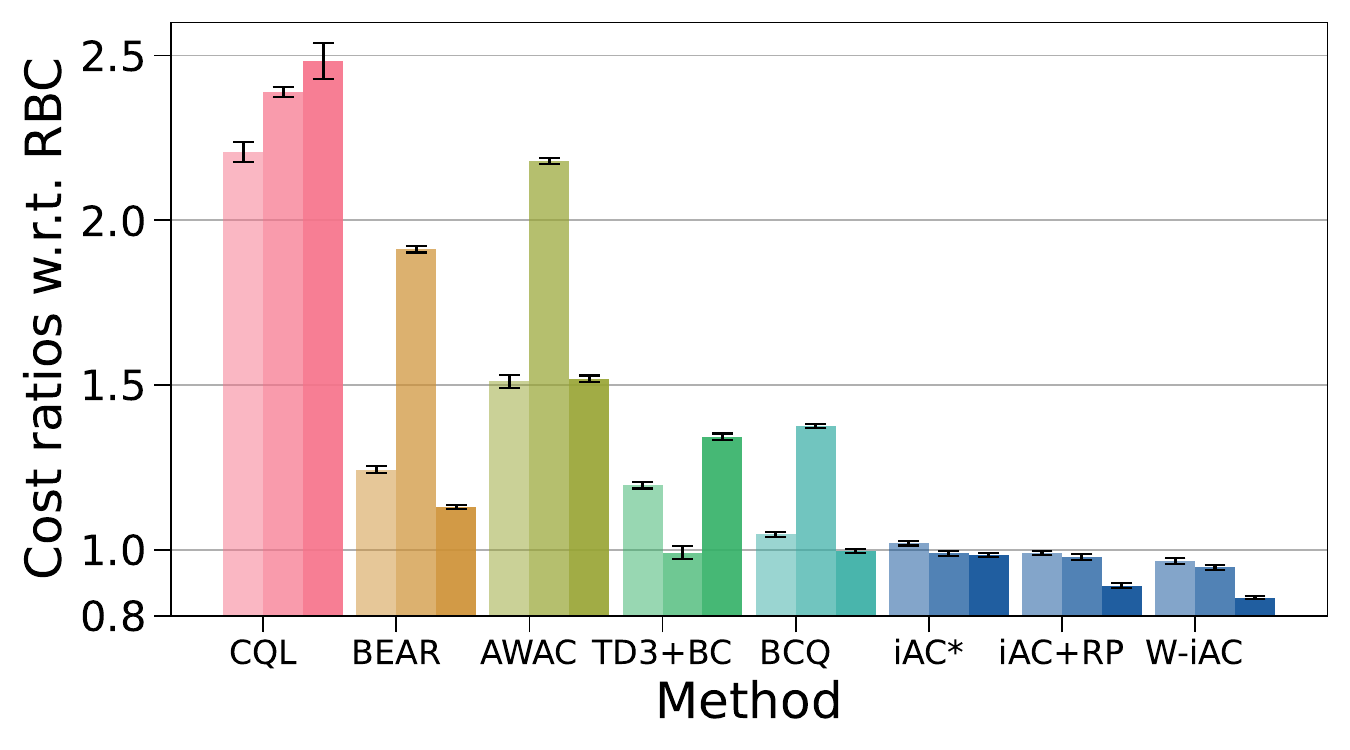}
\vspace{-0.4cm}
\caption{iAC performance comparison, with total cost  (left bar), peak demand (middle bar), and ramping cost (right bar).  Lower is better. Average scores are reported across 10 runs with standard deviation as error bars.}
\label{fig:iAC_bargraph1}
\end{figure}
\begin{figure}[t]
  \centering
\includegraphics[width=0.75\columnwidth]{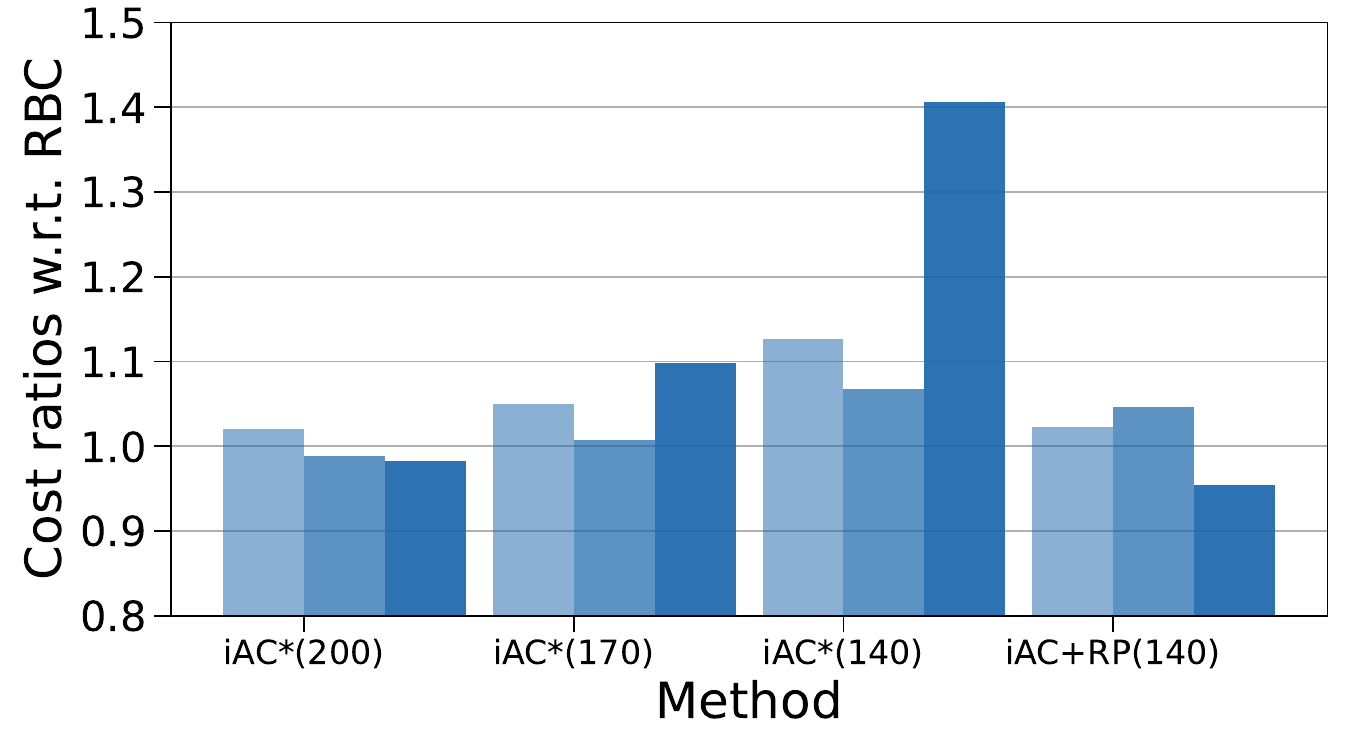}
\caption{Performance of different versions of iAC at different iterations.}
\vspace{-0.4cm}
\label{fig:iAC_robust}
\end{figure}

\emph{Evaluation metrics:} We evaluate the trained offline RL agent on a 1-year stationary dataset from climate zone 1 and report 3 cost ratios w.r.t. RBC: \emph{1)} true cost from the environment, \emph{2)} peak demand, \emph{3)} ramping cost.

Figure \ref{fig:iAC_bargraph1} shows that all 3 iAC versions outperform offline RL baselines on all 3 metrics. Moreover, costs decrease after incorporating RP into iAC* (i.e., iAC+RP) and weighting the current parameter more (i.e., W-iAC). Almost similar performances of iAC* and iAC+RP suggest that the EDS property is being met and not much benefit of RP coming into play. A substantial decrease in cost ratios for W-iAC shows the importance of estimating the current parameters more accurately than parameters later in the time horizon. Figure \ref{fig:iAC_robust} analyzes the robustness of the actor and the benefit of combining EDS with RP. First, almost similar performances of iAC*(200) and iAC*(170) can be seen at training iterations 200 and 170. This can be attributed to the learned parameters inside the EDS region at both iterations, showing robustness to the learned parameter misspecifications. The performance of iAC*(140) got worse due to the learned parameter not entering the local region of EDS, because of relatively fewer training iterations. The inclusion of RP in iAC+RP(140) at 140 iterations reduces costs, showing the benefit of incorporating RP with iAC and having good performance guarantees even outside the EDS region. 

\subsection{Supply chain management}
\label{subsec:SupplyChain}
\begin{figure}[t]
  \centering
\includegraphics[width=0.75\columnwidth]{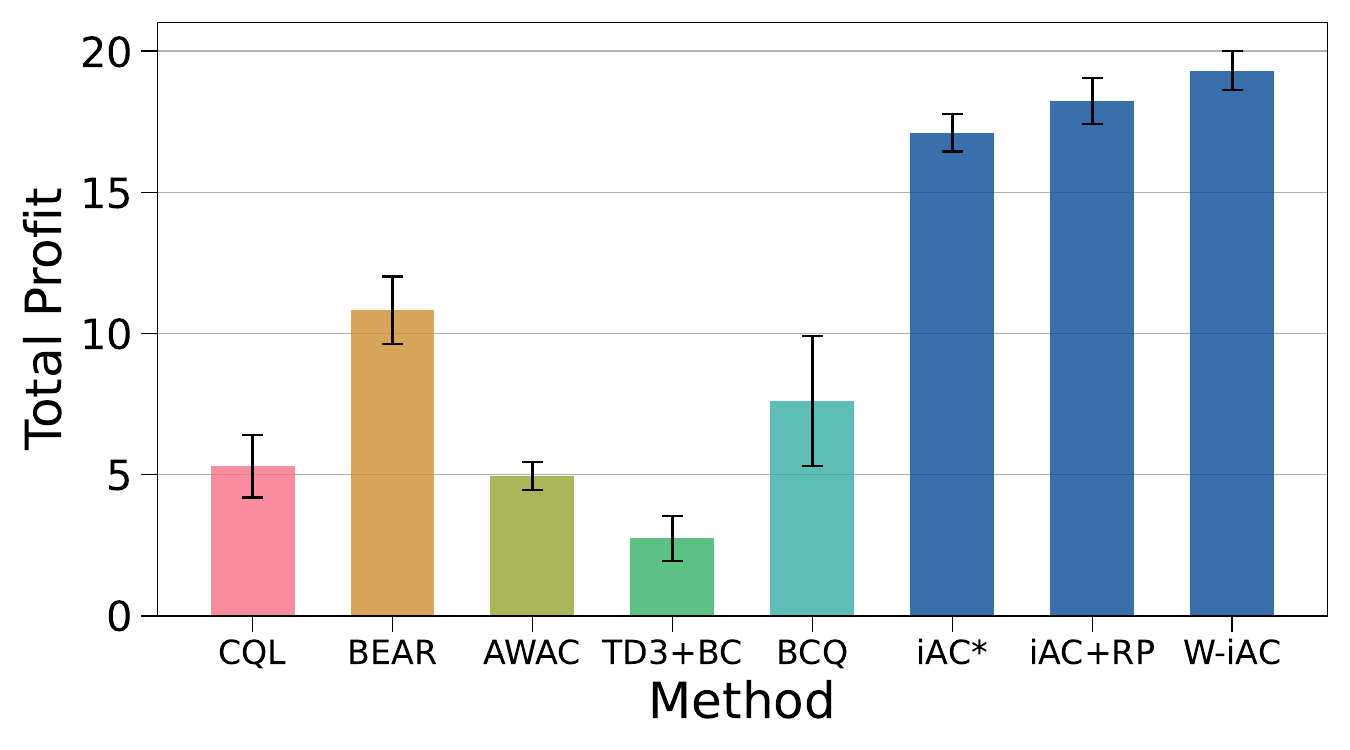}
\vspace{-0.4cm}
\caption{Supply chain total profits and comparison with other offline RL baselines. Average scores and error bars are reported across 10 runs.}
\label{fig:SupplyChainPerformance}
\end{figure}
We consider an offline version of the supply chain problem from \cite{agrawal2020learning}, where the goal is to find a set of policies to maximize profit for a network of supply chains. We consider a setup with prediction horizon $T=20$, $n_h = 4$ nodes, $n_p = 8$ links, and $n_c = 2$ customer links. We collect data using a default optimization model for 200 days. We denote $h_t$ as the amount of goods at each node at time $t$, $p_t$ as the price at which the warehouses can buy from suppliers at time $t$, $Y$ as the fixed price, $d^c_t$ as the consumer demand at time $t$, and $\kappa$ as the shipping cost; actions include \emph{1)} $b_t$ (quantity to buy from suppliers); \emph{2)} $g_t $ (quantity to be sold to the customers); and \emph{3)} $z_t$ (quantity to be shipped). For some random $\alpha, \Lambda$, the holding costs are $\alpha^Th_t + \Lambda^T h_t^2$, where the square is elementwise. 
The future demand and prices are modeled using the log-normal distribution. The true reward is: $ -p_t^Tb_t + Y^Tg_t - \kappa^Tz_t - \alpha^Th_t - \Lambda^Th_t^2$. The actor optimization model is parametrized by $\zeta_t = (G,q) \hspace{0.1cm} \forall t \in [0:20]$, where $G \in \mathbb{R}^{4 \times 4}$, $q \in \mathbb{R}^{4}$, and the surrogate reward is: $\bar{r}(s_t,a_t;G,q) = -p_t^Tb_t - Y^Ts_t + \kappa^Tz_t - \|Gh_t\|^2_2 - q^Th_t$, where $G$ and $q$ capture the effect of randomness in $\alpha$ and $\Lambda$. RWL for the critic is designed as $\psi(\phi;\omega) \coloneqq \omega_1 \phi + \omega_{bias}$, where $\phi$ is the optimal value from the actor.
Figure \ref{fig:SupplyChainPerformance} shows all 3 variants of iAC outperform other baselines in profit maximization. (See Appendix for more experimental details.) 

\section{Conclusion}
\label{sec:Conclusion}
We proposed a novel offline RL framework: implicit actor-critic, which employs optimization solution functions as policies. We showed its benefits theoretically and experimentally by exploiting the EDS property. 

\bibliographystyle{plain}
\bibliography{references}

\appendices
\section{Additional theoretical results and proofs}

\textbf{Proof of Lemma \ref{lem:CompGenFuncApprox}}.

\begin{proof}
    Firstly, we obtain the bound for general function approximation. Due to Lipschitzness assumption of the policy, we have:
    \begin{equation*}
        \|\pi_{\theta^*}(s) - \pi_{\theta}^{gen}(s)\|_2 \leq L_\pi \|\theta^* - \theta^{gen}\|_2 \qquad \forall s \in \mathcal{S},
    \end{equation*}
    where $L_\pi$ is the Lipschitz constant. If the magnitude of each entry in the mismatch vector $(\theta^* - \theta^{gen})$ is upper bounded by constant $Z>0$, then
\begin{equation*}
\begin{aligned}
    \|\theta^* - \theta^{gen}\|_2 \leq &\sqrt{\underbrace{Z^2+ \cdots + Z^2}_{d(T+1) \hspace{0.2cm} \text{times}} } \\ = & \quad Z\sqrt{d(T+1)}.
    \end{aligned}
\end{equation*}
Next, we obtain the upper bound for $\theta^{iAC}$. Using the EDS property and equation \ref{eq:EDSsampleEqn}, we can write
\begin{equation*}
    \begin{aligned}
        &\|\pi_{\theta^*}(s) - \pi_{\theta^{iAC}}(s)\|_2 \leq \mathcal{O}\left(\sum_{t=0}^T\lambda^t H \|\zeta_t^* - \zeta_t^{iAC}\|_2\right) \\&\leq  \mathcal{O}\Bigg(\lambda\underbrace{ \sqrt{\underbrace{Z^2+ \cdots + Z^2}_{d \hspace{0.2cm} \text{times}}} + \cdots +\lambda^{T} \sqrt{\underbrace{Z^2+ \cdots + Z^2}_{d \hspace{0.2cm} \text{times}} } }_{T+1 \hspace{0.2cm} \text{times}} \Bigg)  \\& = \mathcal{O}\left(\lambda Z\sqrt{d} +\lambda^2 Z\sqrt{d}+ \cdots + \lambda^{T} Z \sqrt{d} \right) \\ &= \mathcal{O}\left(Z\sqrt{d} \frac{(1-\lambda^{T+1})}{1-\lambda}\right).
    \end{aligned}
\end{equation*}  
This completes the proof.
\end{proof}

\textbf{Monotonicity property of the solution functions.}  We introduce the monotonicity property of the optimization solution functions, which states that the output of a monotone function applied to the optimal value of an optimization is equal to the optimal value of the optimization where the same monotone function is applied to the objective function. The next lemma proves the statement formally.

\begin{lemma}[Monotonicity lemma]\label{lem:MonotonicityLemma} Let $P_1 \coloneqq 
\{\underset{a}{\max}\quad\bar{r}(a) \quad \text{s.t.} \quad G(a) \leq 0\}$ and $P_2 \coloneqq 
\{\underset{a}{\max}\quad\psi\left(\bar{r}(a)\right) \quad \text{s.t.} \quad G(a) \leq 0\}$ denote two convex optimization problems, where $\psi:\mathbb{R}\rightarrow\mathbb{R}$ is some monotone function. Let $a^*$ and $a_\psi^*$ denote the solutions of $P_1$ and $P_2$, respectively. Then both the solutions are same, i.e., $a^* = a_\psi^*$.
\end{lemma}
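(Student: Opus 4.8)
The plan is to exploit the single structural fact that $P_1$ and $P_2$ share the same feasible set and differ only through an order-preserving map applied to a common objective. Since $\psi$ is monotonically increasing, it preserves the relative ordering of objective values across feasible points, so the collection of maximizers cannot change. First I would fix the common feasible set $\mathcal{A}_{\mathrm{feas}} \coloneqq \{a : G(a)\leq 0\}$, which by Assumption \ref{asmptn:regularityAssumption} is closed, bounded, and has nonempty interior, so that maximizers of both problems exist.

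The core step is to establish the equality of maximizer sets $\argmax_{a\in\mathcal{A}_{\mathrm{feas}}}\bar r(a) = \argmax_{a\in\mathcal{A}_{\mathrm{feas}}}\psi\!\left(\bar r(a)\right)$ via two inclusions. For the forward inclusion, I would take $a^*$ maximizing $\bar r$; then $\bar r(a^*)\geq \bar r(a)$ for every feasible $a$, and monotonicity of $\psi$ gives $\psi(\bar r(a^*))\geq \psi(\bar r(a))$, so $a^*$ also maximizes $\psi\circ\bar r$. For the reverse inclusion I would argue by contradiction: if $a_\psi^*$ maximizes $\psi\circ\bar r$ but fails to maximize $\bar r$, then there is a feasible $a'$ with $\bar r(a') > \bar r(a_\psi^*)$, and strict monotonicity of $\psi$ forces $\psi(\bar r(a')) > \psi(\bar r(a_\psi^*))$, contradicting optimality of $a_\psi^*$ in $P_2$.

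Finally, I would invoke the uniqueness of the $P_1$ solution already established under Assumption \ref{asmptn:regularityAssumption} (so that $\argmax\bar r$ is the singleton $\{a^*\}$). Combined with the set equality from the previous step, this shows the maximizer set of $P_2$ is also exactly $\{a^*\}$, hence $a_\psi^* = a^*$, completing the argument.

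The main subtlety, rather than a genuine obstacle, is the role of the monotonicity direction: the equality of maximizer sets relies on $\psi$ being \emph{strictly} increasing in the reverse inclusion, since a decreasing $\psi$ would turn the maximization into a minimization, and a merely flat $\psi$ could enlarge the maximizer set of $P_2$ beyond $\{a^*\}$. I would therefore make explicit that the relevant case is the monotonically increasing reward warping function used for the critic in Section \ref{subsec:CriticModel}, for which the argument goes through cleanly.
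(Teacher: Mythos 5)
Your proof is correct and follows essentially the same route as the paper's own: use (strict) monotonicity of $\psi$ to transfer optimality of $a_\psi^*$ in $P_2$ back to optimality in $P_1$, and then conclude $a^* = a_\psi^*$. If anything, you are more careful than the paper on two points it glosses over---the final equality genuinely requires uniqueness of the $P_1$ maximizer (which you explicitly draw from Assumption \ref{asmptn:regularityAssumption}), and the reverse transfer requires $\psi$ to be \emph{strictly} increasing rather than merely monotone, a distinction the paper uses implicitly without flagging it.
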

    
\begin{proof}
We denote the feasible region $G(a)\leq0$ as $\mathcal{A}$. As $a^{\star}$ is the solution for $P_1$, we can write:
\begin{equation}\label{eq:Mono0}
    \bar{r}(a^{\star}) \geq \bar{r}(a) \hspace{0.3cm} \forall a \in \mathcal{A}.
\end{equation}
Similarly, 
\begin{equation}\label{eq:Mono1}
    \psi\big(\bar{r}(a_\psi^*)\big) \geq \psi\big(\bar{r}(a)\big) \hspace{0.3cm} \forall a \in \mathcal{A}.
\end{equation}
Furthermore, we know for any monotone function, if $x > y$, then $\psi(x) > \psi(y)$. Therefore, using \eqref{eq:Mono1}, we can write
\begin{equation}\label{eq:Mono2}
    \bar{r}(a^*_\psi) \geq \bar{r}(a) \hspace{0.3cm} \forall a \in \mathcal{A}.
\end{equation}
From \eqref{eq:Mono0} and $\eqref{eq:Mono2}$, we can write $a^* = a_\psi^*$. Therefore, the optimal solution stays invariant under monotone operations on the objective function of the optimization problem.
\end{proof}

The main implication of the above lemma is that a monotone function over the optimal value of an optimization ($\psi(\bar{r}(a^*))$ in the above lemma) is the same as the optimal value from an optimization with a monotone applied over its objective function $\psi(\bar{r}(a_\psi^*))$. 


\subsection{Realizability and statistical complexity for optimization-based actor and critic}
In this section, we provide upper bounds on joint statistical complexity $S_{\mathcal{F},\Pi}$ and realizability error $\epsilon_{\mathcal{F}}$ for cases when the actor optimization can be represented by a multi-stage mp-LP/QP. 

\textbf{Realizability error.} First, we establish the upper bound on realizability error $\epsilon_{\mathcal{F}}$ (see Assumption \ref{asmptn:realizability}). 
Intuitively, $\epsilon_{\mathcal{F}}$ can be understood as the maximum approximation error that the critic function class $f_\omega^\theta \in \mathcal{F}$ will have to approximate some $Q^{\pi_\theta}$. It is shown in \cite[Theorem 1]{jin2023solution} that the maximum approximation error of an optimization solution function to approximate a twice-differentiable smooth function is $\epsilon$ if the mp-LP has $\mathcal{O}\left(\left(\frac{n_s}{\epsilon} \right)^{n_s/2} \right)$ inequality constraints. Therefore, we can obtain the following approximation error if $n_{\mathcal{I}}$ inequality constraints are used:
\begin{equation*}
    \epsilon = \mathcal{O}\left( \frac{n_s}{n_{\mathcal{I}}^{2/n_s}} \right).
\end{equation*}

The above bound will also hold for $\epsilon_{\mathcal{F}}$ due to the Lipschitzness assumption of the critic function class $\mathcal{F}$ with respect to the actions. 

\begin{lemma}\label{lem:epsilon_F}
The following holds if the actor optimization can be represented by a multistage mp-LP/QP with $n_{\mathcal{I}}$ inequality constraints:
    \begin{equation*}
        \epsilon_{\mathcal{F}} \leq \mathcal{O}\left( \frac{n_s}{(n_{\mathcal{I}})^{2/n_s}}\right).
    \end{equation*}
\end{lemma}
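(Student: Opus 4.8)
The plan is to reduce the Bellman-residual realizability error $\epsilon_{\mathcal{F}}$ of Assumption \ref{asmptn:realizability} to a uniform function-approximation bound for the elements of $\mathcal{F}$, and then to invoke the universal approximation guarantee for multistage mp-LP/QP solution mappings from \cite{jin2023solution}. First I would use the Monotonicity Lemma (Lemma \ref{lem:MonotonicityLemma}) to argue that every critic $f_\omega^\theta = \psi(\phi(\cdot;\theta');\omega) \in \mathcal{F}$ is itself an optimal-value mapping: composing the monotone RWF $\psi$ with the optimal value $\phi$ equals the optimal value of the same optimization with $\psi$ applied to its objective. Consequently $\mathcal{F}$ is exactly a class of solution functions of a multistage mp-LP/QP, and the approximation theory of \cite{jin2023solution} applies verbatim to it.

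Next I would fix an arbitrary $\pi_\theta \in \Pi$ and take the approximation target to be its $Q$-function $Q^{\pi_\theta}$, which under the smooth-dynamics Assumption \ref{asmptn:SmoothMDP} is a twice-differentiable function of the state. By \cite[Theorem 1]{jin2023solution}, a solution mapping built from an mp-LP/QP with $n_{\mathcal{I}}$ inequality constraints approximates such a target to precision $\epsilon$ whenever $n_{\mathcal{I}} = \mathcal{O}\big((n_s/\epsilon)^{n_s/2}\big)$. Inverting this relation for $\epsilon$ yields the rate $\epsilon = \mathcal{O}\big(n_s/n_{\mathcal{I}}^{2/n_s}\big)$; that is, there exists $f_\star \in \mathcal{F}$ with $\|f_\star - Q^{\pi_\theta}\|_\infty \le \epsilon$.

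It then remains to convert this uniform approximation bound into a bound on the Bellman residual that defines $\epsilon_{\mathcal{F}}$. Here I would exploit that $Q^{\pi_\theta}$ is the fixed point of $\mathcal{T}^{\pi_\theta}$ together with the fact that $\mathcal{T}^{\pi_\theta}$ is a $\gamma$-contraction in the sup norm, giving
\[
\|f_\star - \mathcal{T}^{\pi_\theta} f_\star\|_\infty \le \|f_\star - Q^{\pi_\theta}\|_\infty + \gamma\|Q^{\pi_\theta} - f_\star\|_\infty = (1+\gamma)\|f_\star - Q^{\pi_\theta}\|_\infty.
\]
Since each admissible $\nu$ is a probability measure, the $\nu$-weighted $\mathcal{L}_2$ norm is dominated by the sup norm, so the Bellman residual inherits, up to the $(1+\gamma)$ factor, the same order in $n_{\mathcal{I}}$ as the uniform approximation error; taking the minimizer over $\mathcal{F}$ then delivers $\epsilon_{\mathcal{F}} = \mathcal{O}\big(n_s/n_{\mathcal{I}}^{2/n_s}\big)$. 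The Lipschitzness Assumption \ref{asm:PsiLipschitz} enters at exactly this stage: the Bellman operator queries $f_\star$ at the policy-selected actions $\pi_\theta(s')$, and Lipschitz continuity in the action argument keeps these evaluations uniformly controlled so that the state-dimension rate survives.

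The step I expect to be the main obstacle is precisely this transfer of a rate phrased over the state space (dimension $n_s$) to the Bellman residual, which lives over $\mathcal{S}\times\mathcal{A}$, without letting the action dimension $n_a$ enter the exponent. The resolution is to lean on Assumption \ref{asm:PsiLipschitz} to show that the action dependence of the critic is smooth enough not to require additional critical regions of the mp-LP/QP, so that the complexity in $n_{\mathcal{I}}$ is governed by $n_s$ alone. A secondary technical point, which I would dispatch quickly, is verifying that $Q^{\pi_\theta}$ inherits enough smoothness from Assumption \ref{asmptn:SmoothMDP} to serve as a valid twice-differentiable target for \cite[Theorem 1]{jin2023solution}.
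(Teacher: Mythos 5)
Your proposal follows essentially the same route as the paper's own argument: both invoke \cite[Theorem 1]{jin2023solution} for twice-differentiable targets, invert the constraint count $n_{\mathcal{I}} = \mathcal{O}\left((n_s/\epsilon)^{n_s/2}\right)$ to obtain $\epsilon = \mathcal{O}\left(n_s/n_{\mathcal{I}}^{2/n_s}\right)$, and then appeal to the Lipschitzness of the critic in the action argument (Assumption \ref{asm:PsiLipschitz}) to carry this rate over to the Bellman-residual realizability error $\epsilon_{\mathcal{F}}$. The steps you add beyond the paper's terse derivation---the monotonicity reduction via Lemma \ref{lem:MonotonicityLemma}, the $(1+\gamma)$ fixed-point/contraction conversion from $\|f_\star - Q^{\pi_\theta}\|_\infty$ to the Bellman residual, and the domination of the $\nu$-weighted $\mathcal{L}_2$ norm by the sup norm---are correct and simply make explicit what the paper leaves implicit.
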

    
We can observe that as we keep increasing the number of constraints $n_{\mathcal{I}}$, the approximate realizability error will diminish.

\textbf{Statistical complexity.} In this work, we use the $\mathcal{L}_1$ covering number bounds to characterize the statistical complexity for the actor and critic function class. We obtain the covering number bounds for both the actor and critic separately.

\textbf{Covering number bound for the actor.} In the case when the actor optimization can be expressed using  multi-stage mp-LP or mp-QP with $n_{\mathcal{I}}$ linear inequality constraints and $n_{\mathcal{E}}$ equality constraints, we can obtain the covering number bound for the function class $\Pi$ using the result from \cite{jin2023solution} as follows:

\begin{lemma}\label{lem:PiCoveringBound} 
The log-covering number bound for the function class $\Pi$, when $\Pi$ comes from an mp-LP/QP, can be bounded as follows:
\begin{equation*}
\small
\begin{aligned}
    \label{lem:CoveringNumberActor}
    \log \mathcal{N}_1(\Pi,\epsilon) \leq \mathcal{O}\left(\frac{\kappa^{*2}}{\epsilon^2} {\sum}_{0\leq i\leq {(T+1)(n_a-n_{\mathcal{E})}}}\begin{pmatrix}(T+1)n_{\mathcal{I}}\\ i\end{pmatrix}\right),
    \end{aligned}
\end{equation*}
where $\kappa^*$ is some condition number associated with the optimization parameters,
and $\begin{pmatrix}a\\b\end{pmatrix}$ denotes $a$ choose $b$.
\end{lemma}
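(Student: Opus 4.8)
The plan is to reduce the covering-number computation for $\Pi$ to the solution-function covering bound of \cite{jin2023solution} by first exhibiting the multi-stage actor optimization as a single multiparametric program in the parameters $(\theta,s)$. First I would stack the per-stage decision variables $a_{t':\tilde{T}-1}$ and $s_{t'+1:\tilde{T}}$ across all $T+1$ stages into one vector and collect the corresponding per-stage constraints. Under the standing assumption that the surrogate reward is linear (resp.\ quadratic) and $g,h$ are affine (Assumption \ref{asmptn:regularityAssumption}), this stacked problem is a single mp-LP (resp.\ mp-QP) with $(T+1)n_{\mathcal{I}}$ inequality constraints and $(T+1)n_{\mathcal{E}}$ equality constraints. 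Using the equality (dynamics) constraints to eliminate variables leaves an effective free dimension $N=(T+1)(n_a-n_{\mathcal{E}})$, with $m=(T+1)n_{\mathcal{I}}$ remaining inequalities, matching the convention of the constant $C_{\mathcal{F},\Pi}$ used in Theorem \ref{thm:iACguarantee}.

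Next I would invoke the parametric-programming characterization of the solution map: for an mp-LP/QP the optimizer $\pi_\theta(s)$ is piecewise affine in its parameters, with the parameter domain partitioned into polyhedral critical regions indexed by optimal active sets. Since at any optimal vertex at most $N$ of the $m$ inequalities can be simultaneously active, the number of critical regions is bounded by the number of admissible active sets,
\[
\sum_{0\le i\le N}\binom{m}{i}=\sum_{0\le i\le (T+1)(n_a-n_{\mathcal{E}})}\binom{(T+1)n_{\mathcal{I}}}{i},
\]
which is exactly the combinatorial factor in the claimed bound. Assumption \ref{asmptn:LICQasmptn} (LICQ and SSOSC) guarantees that the KKT system is nonsingular on each region, so the affine selection on each region is well defined and its gradient is controlled by the condition number $\kappa^*$ of the optimality system.

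With this structure in place, the result follows by applying the $\mathcal{L}_1$ covering-number bound for solution functions of mp-LP/QP from \cite{jin2023solution}: the per-region affine maps, whose gradients are bounded through $\kappa^*$, contribute the $\kappa^{*2}/\epsilon^2$ factor, and covering each of the at most $\sum_i\binom{m}{i}$ pieces yields $\log \mathcal{N}_1(\Pi,\epsilon)\le \mathcal{O}\!\left(\frac{\kappa^{*2}}{\epsilon^2}\sum_{0\le i\le (T+1)(n_a-n_{\mathcal{E}})}\binom{(T+1)n_{\mathcal{I}}}{i}\right)$. I would finish by noting that only the actor class $\Pi$ enters here, so the reward- and transition-Lipschitz assumptions (Assumptions \ref{asmptn:SmoothMDP}, \ref{asm:PsiLipschitz}) are not required for this step.

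The main obstacle I anticipate is the reformulation and counting step: verifying that stacking the stages preserves the mp-LP/QP form and, in particular, that eliminating the $(T+1)n_{\mathcal{E}}$ equality (dynamics) constraints reduces the free dimension to exactly $(T+1)(n_a-n_{\mathcal{E}})$ so that the active-set count matches the stated binomial sum. A secondary subtlety is ensuring the condition number $\kappa^*$ is uniform over $\theta\in\Theta$ and over all critical regions; this is where compactness of $\Theta$ together with LICQ/SSOSC must be used to extract a single finite $\kappa^*$ in the final bound.
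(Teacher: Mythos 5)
Your proposal is correct and follows essentially the same route as the paper's proof: both rest on the fact (from the cited solution-function work) that the stacked multi-stage mp-LP/QP has a piecewise-affine solution map, bound the per-piece log-covering number by $\kappa^{*2}/\epsilon^2$, bound the number of critical regions by the active-set count $\sum_{0\le i\le (T+1)(n_a-n_{\mathcal{E}})}\binom{(T+1)n_{\mathcal{I}}}{i}$, and multiply. The paper simply cites these two facts directly, whereas you additionally reconstruct the stacking/elimination and active-set counting that justify them; this is a more detailed write-up of the same argument, not a different one.
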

\begin{proof}
    Firstly, from \cite{jin2023solution}, the solution function is piecewise affine. Each piece can be bounded with $\kappa^{*2}/\epsilon^2$, where $\kappa^*$ is a condition number associated with the reformulated mp-LP or QP \cite{jin2023solution}. Second, the maximum number of critical regions can be bounded by ${\sum}_{0\leq i\leq {(T+1)(n_a-n_{\mathcal{E})}}}\begin{pmatrix}(T+1)n_{\mathcal{I}}\\ i\end{pmatrix}$ \cite{jin2023solution}.
\end{proof}

\textbf{Covering number bound for the critic.}
Next, we present the covering bound for the critic function class $\mathcal{F}$. Note that the function class $\mathcal{F} \coloneqq \{\psi\big(\phi(\cdot,\pi_\theta(\cdot);\cdot)\big): \pi_\theta \in \Pi, \phi \in \Phi^{opt}, \psi \in \mathcal{F}^{mono}, \theta \in \Theta \}$, where $\Phi^{opt}$ is the optimal value function class denoted as $\Phi^{opt} \coloneqq \{f^{L}\big(\cdot,\pi_\theta(\cdot);\cdot\big), \pi_\theta \in \Pi,f^{L}\in \mathcal{F}^{L}  \}$, $\mathcal{F}^{L}$ is the linear function class, as we consider the representation of actor through an mp-LP. Function class $\mathcal{F}^{mono}$ denotes the monotonic function class. This makes critic function class $\mathcal{F}$ a composite function of other functions. Therefore, we first present the covering number of other constituent function classes and then provide the overall covering number bound for the function class $\mathcal{F}$. 

First, the covering number for the function class $\mathcal{F}^{L}$ is bounded using the covering number for the linear function class \cite[Corollary 9]{kakade2008complexity} given as $\log \mathcal{N}_1(f^L,\epsilon) \leq \mathcal{O} \left(1/\epsilon^2\right)$. Second, we provide the covering number bound for the uniformly bounded monotonic function class, which follows directly from \cite[Theorem 2.7.5]{van1996geometric}.

\begin{lemma}
\label{lem:MonotoneCovering}
The covering number bound for the class $\mathcal{F}^{mono}$ of bounded monotone functions is given as:
\begin{equation*}
    \log \mathcal{N}_1(\epsilon, \mathcal{F}^{mono}) \leq C_{mono}\bigg(\frac{1}{\epsilon} \bigg),
\end{equation*}
where $C_{mono}$ is a positive constant.
\end{lemma}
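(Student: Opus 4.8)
The plan is to reduce the claim to the classical bracketing-entropy bound for uniformly bounded monotone functions and then pass from bracketing numbers to $\mathcal{L}_1$-covering numbers. First I would observe that, since $\mathcal{F}^{mono}$ consists of uniformly bounded monotone functions, an affine rescaling lets me assume without loss of generality that each $\psi: \mathbb{R} \to [0,1]$ and that each $\psi$ is monotonically increasing (the decreasing case being symmetric, and the rescaling only inflating the final constant). This places $\mathcal{F}^{mono}$ exactly in the setting of \cite[Theorem 2.7.5]{van1996geometric}, and the constant $C_{mono}$ will absorb the rescaling factor.

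The heart of the argument is the construction of a set of brackets of cardinality $\exp(\mathcal{O}(1/\epsilon))$. I would discretize both axes at resolution $\epsilon$: partition the range $[0,1]$ into $\lceil 1/\epsilon \rceil$ levels and partition the domain into $\lceil 1/\epsilon \rceil$ cells each of $Q$-measure at most $\epsilon$ for the relevant probability measure $Q$. A monotone $\psi$ is then pinned down up to $L_1(Q)$-error $\mathcal{O}(\epsilon)$ by recording, for each level, the cell in which $\psi$ first crosses it; because $\psi$ is monotone, these crossing cells form a nondecreasing sequence. The crucial combinatorial fact is that the number of nondecreasing sequences of length $\lceil 1/\epsilon \rceil$ with values in $\{1,\dots,\lceil 1/\epsilon \rceil\}$ is a single binomial coefficient $\binom{\mathcal{O}(1/\epsilon)}{\mathcal{O}(1/\epsilon)} = \exp(\mathcal{O}(1/\epsilon))$, so the logarithm of the bracket count is $\mathcal{O}(1/\epsilon)$. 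This yields $\log N_{[]}(\epsilon, \mathcal{F}^{mono}, L_r(Q)) \leq C_{mono}/\epsilon$ for every probability measure $Q$ and every $r \geq 1$, which is precisely the content of the cited theorem.

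Finally I would close the gap between bracketing and covering: an $\epsilon$-bracketing set is in particular an $\epsilon$-covering set, since the bracket centers cover $\mathcal{F}^{mono}$ in $\mathcal{L}_1$, so $\mathcal{N}_1(\epsilon, \mathcal{F}^{mono}) \leq N_{[]}(\epsilon, \mathcal{F}^{mono}, L_1(Q))$ and taking logarithms gives the stated bound. The main obstacle is not any single estimate but the bracket construction itself — specifically verifying that monotonicity forces the level-crossing sequence to be nondecreasing, so that the count collapses to one binomial coefficient rather than the naive $(1/\epsilon)^{1/\epsilon}$. This monotone lattice-path counting is the one genuinely nontrivial ingredient, and since it is exactly what \cite[Theorem 2.7.5]{van1996geometric} supplies, in the paper I would simply invoke that theorem after the normalization step.
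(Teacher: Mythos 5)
Your proposal is correct and takes essentially the same route as the paper, which proves this lemma simply by invoking \cite[Theorem 2.7.5]{van1996geometric} for uniformly bounded monotone functions. Your explicit normalization step and the passage from bracketing numbers to $\mathcal{L}_1$-covering numbers (via the fact that lower brackets form a cover) are exactly the glue the paper leaves implicit when it says the bound ``follows directly'' from that theorem, so your write-up is, if anything, slightly more careful than the original.
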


Next, we provide the covering number bound for the optimal value function class $\Phi^{opt}$.

\begin{lemma}\label{lem:OptimalValueCovering}
Consider the function class $\Phi^{opt} \coloneqq \{f^{L}\big(\cdot,\pi_\theta(\cdot);\cdot\big), \pi_\theta \in \Pi,f^{L}\in \mathcal{F}^{L}  \}$, where $\mathcal{F}^{L}$ is the linear function class. Then for any constant $c_1$, $c_2 \in (0,1)$, the covering number for $\Phi^{opt}$ is bounded as
\begin{align*}
    \log \mathcal{N}_1(\epsilon, \Phi^{opt}) \leq \mathcal{O}\left( \frac{C_{\mathcal{F},\Pi}}{\epsilon^2}\right),
\end{align*}
where $C_{\mathcal{F},\Pi} = {\sum}_{0\leq i\leq {(T+1)(n_a-n_{\mathcal{E})}}}\begin{pmatrix}(T+1)n_{\mathcal{I}}\\ i\end{pmatrix}$, and constants $c_1$, $c_2$ are hidden inside the big O notation.
\end{lemma}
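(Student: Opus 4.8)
The plan is to exploit the explicit piecewise-affine geometry of the optimal value map of a multi-stage mp-LP, mirroring the argument already used for the actor class $\Pi$ in Lemma \ref{lem:PiCoveringBound}. The key observation is that each function in $\Phi^{opt}$ is the optimal value of a linear program whose cost and right-hand-side data depend affinely on the input and on the parameters; by parametric-programming theory this value map is continuous and piecewise affine over a partition of the domain into finitely many polyhedral critical regions. Once we know (i) how many such regions there are and (ii) that on each region the function is affine, covering $\Phi^{opt}$ reduces to covering a bounded number of affine pieces, and affine function classes have a well-understood $\mathcal{O}(1/\epsilon^2)$ log-covering number.

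First I would invoke the mp-LP reformulation \cite{kouramas2011explicit,jin2023solution}: since the actor optimization \eqref{eq:CriticOptimization} uses a linear surrogate reward, it can be written as a multi-stage mp-LP with $(T+1)n_{\mathcal{I}}$ inequality and $(T+1)n_{\mathcal{E}}$ equality constraints, and its optimal value is piecewise affine in the parameters, with the number of critical regions bounded by $C_{\mathcal{F},\Pi} = \sum_{0\le i\le (T+1)(n_a-n_{\mathcal{E}})}\binom{(T+1)n_{\mathcal{I}}}{i}$ --- the same count that bounds the pieces of the solution map $\pi_\theta$ in Lemma \ref{lem:PiCoveringBound}. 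Second, because each element of $\Phi^{opt}$ is the composition $f^{L}(\cdot,\pi_\theta(\cdot);\cdot)$ of a linear functional $f^{L}\in\mathcal{F}^{L}$ with the piecewise-affine solution map $\pi_\theta$, and $\pi_\theta$ shares the same critical-region partition, the composite remains continuous and piecewise affine over at most $C_{\mathcal{F},\Pi}$ regions. Third, restricted to any single region the function is genuinely affine, so its restriction lives in a linear class whose $\mathcal{L}_1$ log-covering number is $\mathcal{O}(1/\epsilon^2)$ by \cite[Corollary 9]{kakade2008complexity} (the same bound used for $\mathcal{F}^{L}$), with the hidden constants $c_1,c_2$ absorbing the condition number $\kappa^*$ and the range of the data.

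Finally I would assemble the global bound from the per-region bounds. A function in $\Phi^{opt}$ is determined, up to precision $\epsilon$, by choosing an affine $\epsilon$-approximant on each of the $C_{\mathcal{F},\Pi}$ regions, so the covering number of the whole class is at most the product of the per-region covering numbers, and taking logarithms turns the product into a sum:
\begin{equation*}
\log \mathcal{N}_1(\epsilon,\Phi^{opt}) \le C_{\mathcal{F},\Pi}\cdot \mathcal{O}\!\left(\tfrac{1}{\epsilon^2}\right) = \mathcal{O}\!\left(\frac{C_{\mathcal{F},\Pi}}{\epsilon^2}\right),
\end{equation*}
which is the claim. The main obstacle I anticipate is the second step: making rigorous that a single critical-region partition controls every member of the class simultaneously --- i.e., that post-composing the linear head $f^{L}$ with the argmax map $\pi_\theta$ does not refine the partition beyond $C_{\mathcal{F},\Pi}$ pieces. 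This requires leaning on the fact (from \cite{jin2023solution}) that the polyhedral partition induced by the mp-LP is common to both the solution map and the optimal value map, so the linear post-composition cannot increase the region count; the behavior on region boundaries, where affineness holds only one-sidedly, is a measure-zero technicality that does not affect the $\mathcal{L}_1$ covering estimate.
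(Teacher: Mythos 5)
Your proof is sound at the paper's level of rigor, but it takes a genuinely different route from the paper's. The paper proves this lemma by \emph{composing covers}: it builds a $c_1\epsilon$-cover of the policy class $\Pi$ (imported as a black box from Lemma \ref{lem:PiCoveringBound}) and a $(1-c_1)\epsilon$-cover of the linear class $\mathcal{F}^{L}$ (via \cite[Corollary 9]{kakade2008complexity}), forms the net of all composed pairs $f^{L}_{j,i_{j'}}(\cdot,\pi_{j,i_j}(\cdot);\cdot)$, and bounds $\log|\Phi^{opt}_\epsilon|$ by the \emph{sum} $\mathcal{O}(C_{\mathcal{F},\Pi}/\epsilon^2)+\mathcal{O}(1/\epsilon^2)$ --- this is also why the otherwise-unused constants $c_1,c_2$ appear in the statement. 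You instead apply the region-counting argument \emph{directly to the composite class}: linear head $\circ$ piecewise-affine solution map is piecewise affine on the same polyhedral partition, there are at most $C_{\mathcal{F},\Pi}$ critical regions, and each region contributes an $\mathcal{O}(1/\epsilon^2)$ affine covering term. In effect you re-run the paper's proof of Lemma \ref{lem:PiCoveringBound} at the level of the value map, rather than reusing that lemma as a module. Each approach buys something: yours is more self-contained and geometric and dispenses with the Lipschitz composition step the paper leaves implicit; the paper's is modular, and --- importantly --- its template transfers verbatim to the critic class with a \emph{monotone} head (Lemma \ref{lem:appRWFcoveringBound}), where your argument would break because $\psi\circ(\text{piecewise affine})$ is no longer piecewise affine, only piecewise smooth.

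One caveat on your assembly step: choosing ``an affine $\epsilon$-approximant on each of the $C_{\mathcal{F},\Pi}$ regions'' presumes a partition common to all members of $\Phi^{opt}$, yet the critical regions move with $\theta$. The fix you invoke --- that the partition is shared by the solution map and the value map --- concerns a \emph{fixed} $\theta$ and does not address variation \emph{across} $\theta$; a complete repair would either cover the space of partitions (the boundary perturbations cost only $\mathcal{O}(\epsilon)$ in $\mathcal{L}_1$ since the functions are bounded) or fold the region-selection combinatorics into the count. That said, the paper's own proof of Lemma \ref{lem:PiCoveringBound}, on which its proof of this lemma also rests, glosses over exactly the same point, so your argument is at a comparable level of rigor and reaches the same $\mathcal{O}\left(C_{\mathcal{F},\Pi}/\epsilon^2\right)$ bound.
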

\begin{proof}
We form the $\epsilon$-coverings around the domain of interest. First, we create a $c_1\epsilon$-cover $\Pi_{j,\epsilon} = \{\pi_{j,1}, ..., \pi_{j,p_j} \}$ and a $(1 - c_1)\epsilon$-cover $\mathcal{F}^{L}_{j,\epsilon} = \{f^{L}_{j,1}, ... , f^{L}_{j,p_{j'}}\}$ over $\mathcal{F}^{L}$. $p_j$ and $p_{j'}$ can be chosen to satisfy $p_j \leq \mathcal{N}_1\left(c_1 \epsilon,\Pi \right),$ and $p_{j'} \leq \mathcal{N}_1\big((1-c_1)\epsilon, \mathcal{F}^{L} \big)$. Consider the set of functions
\begin{equation*}
    \Phi^{opt}_{\epsilon} \coloneqq \left\{ f^{L}_{j,i_j'}(\cdot,\pi_{j,i_j'}(\cdot);\cdot), \forall i_j \in [p_j], \forall i_{j'} \in [p_{j'}], \pi \in \Pi \right\},
\end{equation*}

where each member function forms the $c_1\epsilon$-set and $(1-c_1)\epsilon$-set from every region in $\mathcal{S}$ and $\Pi$, respectively. Since, $\Pi_{j,\epsilon}$ and $\mathcal{F}^{L}_{j,\epsilon}$ are $c_1\epsilon$ and $(1-c_1)\epsilon$ covers of $\mathcal{S}$ and $\Pi$, there exists a selection function $i_j(\pi)$ and $i_{j'}(\psi)$ such that for any $s \in \mathcal{S}$ and $a \in \mathcal{A}$, the following holds $$|\pi(s) - \pi_{j,i_j}(s)| \leq \epsilon,$$ and $$|f^{L}(\cdot,\pi(s);\cdot) - f^{L}_{j,i_{j'}}(\cdot,\pi_{j,i_j}(s);\cdot)| \leq \epsilon.$$
Thus, $\Phi^{opt}_{\epsilon}$ forms an $\epsilon$-net of $\Pi.$ The result can be obtained by taking $|\Phi^{opt}_\epsilon|$ which is bounded as $$|\Phi^{opt}_\epsilon|\leq \mathcal{N}_1\left(c_1 \epsilon,\mathcal{F}^L \right)\mathcal{N}_1\big((1-c_1)\epsilon, \Pi \big).$$
As $\mathcal{F}^L$ is a class of linear function class, we bound $\mathcal{N}_1(\epsilon,\mathcal{F}^L)$ by $\mathcal{O}\left(\exp(1/\epsilon^2)\right)$ \cite[Corollary 9]{kakade2008complexity}. Similarly, by substituting the covering number bound for $\Pi$ from Lemma \ref{lem:PiCoveringBound}, we can obtain the final result.
\end{proof}

Now we finally prove the covering number bound for the critic function class $\mathcal{F}$.
\begin{lemma}
\label{lem:appRWFcoveringBound}
Consider the function class $\mathcal{F} \coloneqq \{\psi\big(\phi(\cdot,\pi_\theta(\cdot);\zeta);\omega \big), \pi_\theta \in \Pi, \phi \in \Phi_{opt}, \psi \in \mathcal{F}^{mono}, \theta \in \Theta \}$. Then for any constant $c_1,c_2 \in (0,1)$, the following  holds:
\begin{equation*}
    \log \mathcal{N}_1(\mathcal{F},\epsilon) \leq \mathcal{O}\left(\frac{(n_a - n_{\mathcal{E}}) 2^{n_{\mathcal{I}}} + \epsilon \sqrt{n_{\mathcal{I}}}}{\epsilon^2 \sqrt{n_{\mathcal{I}}}} \right).
\end{equation*}
\end{lemma}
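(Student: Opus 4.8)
The plan is to treat $\mathcal{F}$ as the composition class $\{\psi\circ\phi:\phi\in\Phi^{opt},\ \psi\in\mathcal{F}^{mono}\}$ and to bound its $\mathcal{L}_1$ covering number by the \emph{product} of the covering numbers of the inner optimal-value class $\Phi^{opt}$ and the outer monotone class $\mathcal{F}^{mono}$, both of which are already controlled by Lemmas \ref{lem:OptimalValueCovering} and \ref{lem:MonotoneCovering}. Concretely, I would fix the target precision $\epsilon$, choose a split parameter $c_2\in(0,1)$, and construct a $(c_2\epsilon/L_\psi)$-cover $\{\tilde\phi_j\}$ of $\Phi^{opt}$ together with a $((1-c_2)\epsilon)$-cover $\{\tilde\psi_\ell\}$ of $\mathcal{F}^{mono}$, where $L_\psi$ is the Lipschitz constant of the monotone warping from Assumption \ref{asm:PsiLipschitz}.

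The second step is to verify that the grid $\{\tilde\psi_\ell\circ\tilde\phi_j\}$ is an $\epsilon$-cover of $\mathcal{F}$. Given any $f=\psi(\phi(\cdot))$, I select the nearest $\tilde\phi_j$ and $\tilde\psi_\ell$; then for every input $x$,
\[
|\psi(\phi(x))-\tilde\psi_\ell(\tilde\phi_j(x))|\le L_\psi\,|\phi(x)-\tilde\phi_j(x)|+|\psi(\tilde\phi_j(x))-\tilde\psi_\ell(\tilde\phi_j(x))|,
\]
where the first term uses the Lipschitzness of $\psi$ to absorb the inner perturbation and the second is the outer perturbation handled by the monotone cover. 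Integrating this bound in the $\mathcal{L}_1$ sense and using the two chosen precisions shows the right-hand side is at most $c_2\epsilon+(1-c_2)\epsilon=\epsilon$, so the product grid is a valid cover and
\[
\log\mathcal{N}_1(\mathcal{F},\epsilon)\le\log\mathcal{N}_1\!\left(\Phi^{opt},\tfrac{c_2\epsilon}{L_\psi}\right)+\log\mathcal{N}_1\!\left(\mathcal{F}^{mono},(1-c_2)\epsilon\right)\le\mathcal{O}\!\left(\frac{C_{\mathcal{F},\Pi}}{\epsilon^2}\right)+\mathcal{O}\!\left(\frac{1}{\epsilon}\right),
\]
after substituting Lemma \ref{lem:OptimalValueCovering} (which itself feeds on the actor bound of Lemma \ref{lem:PiCoveringBound}) and Lemma \ref{lem:MonotoneCovering}.

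The final step is to convert $C_{\mathcal{F},\Pi}=\sum_{0\le i\le(T+1)(n_a-n_{\mathcal{E}})}\binom{(T+1)n_{\mathcal{I}}}{i}$ into the closed form appearing in the statement. I would bound the partial binomial sum by the number of summands times the largest (central) coefficient, and then invoke Stirling's approximation $\binom{m}{\lfloor m/2\rfloor}=\mathcal{O}(2^{m}/\sqrt{m})$, so that the sum contributes a factor of order $(n_a-n_{\mathcal{E}})\,2^{n_{\mathcal{I}}}/\sqrt{n_{\mathcal{I}}}$ (absorbing constants and horizon-dependent factors into the $\mathcal{O}$). Substituting this into the displayed bound and writing the two terms over a common denominator yields $\mathcal{O}\big((n_a-n_{\mathcal{E}})2^{n_{\mathcal{I}}}+\epsilon\sqrt{n_{\mathcal{I}}}\big)/(\epsilon^2\sqrt{n_{\mathcal{I}}})$, completing the argument.

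I expect the main obstacle to be the second step: justifying the product-of-covers bound for a \emph{composition} in which the outer class $\mathcal{F}^{mono}$ consists of merely monotone (not globally smooth) functions. The delicate point is that the inner perturbation can only be absorbed through the Lipschitz property of $\psi$ from Assumption \ref{asm:PsiLipschitz}, while the outer variation must be handled entirely by the monotone covering bound of Lemma \ref{lem:MonotoneCovering}; one must check that the $\mathcal{L}_1$ covers compose without a blow-up in precision and that the boundedness of the range of $\phi\in\Phi^{opt}$ is used to keep the covering of $\mathcal{F}^{mono}$ finite. The subsequent binomial estimate is then routine.
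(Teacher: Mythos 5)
Your proposal takes essentially the same route as the paper's own proof: construct separate covers of the inner optimal-value class $\Phi^{opt}$ (Lemma \ref{lem:OptimalValueCovering}) and the outer monotone class $\mathcal{F}^{mono}$ (Lemma \ref{lem:MonotoneCovering}), argue the composite grid is an $\epsilon$-net, and multiply the two covering numbers to get $\log\mathcal{N}_1(\mathcal{F},\epsilon)\leq \mathcal{O}\left(\frac{C_{\mathcal{F},\Pi}+C_{mono}\epsilon}{\epsilon^2}\right)$. If anything, you are more careful than the paper on exactly the two points you flag: the paper's proof never explicitly invokes the Lipschitz property of $\psi$ (Assumption \ref{asm:PsiLipschitz}) to absorb the inner perturbation through the composition, and it stops at $\exp\left(\frac{C_{\mathcal{F},\Pi}+C_{mono}\epsilon}{\epsilon^2}\right)$ without carrying out the binomial-sum/Stirling estimate that converts $C_{\mathcal{F},\Pi}$ into the $(n_a-n_{\mathcal{E}})2^{n_{\mathcal{I}}}/\sqrt{n_{\mathcal{I}}}$ form appearing in the statement.
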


\begin{proof}
    We form the $\epsilon$-coverings around the domain of interest. First, create a $c_1\epsilon$-cover $\Pi_{j,\epsilon} = \{\pi_{j,1}, ..., \pi_{j,p_j} \}$ and a $(1 - c_1)\epsilon$-cover $\mathcal{F}^{{mono}}_{j,\epsilon} = \{\psi_{j,1}, ... , \psi_{j,p_{j'}}\}$ over $\Pi$. $p_j$ and $p_{j'}$ can be chosen to satisfy $$p_j \leq \mathcal{N}_1\left(c_1 \epsilon,\Pi \right),$$ and $$p_{j'} \leq \mathcal{N}_1\big((1-c_1)\epsilon, \mathcal{F}^{{mono}} \big).$$

Consider the set of functions
\begin{equation*}
\begin{aligned}
    \mathcal{F}_{\epsilon} \coloneqq \{ & Q_{\psi}: Q_{\psi} = \psi_{j,i_j'}(\pi_{j,i_j};\omega), \forall i_j \in [p_j], \\ & \forall i_{j'} \in [p_{j'}], \omega \in \Omega \},
    \end{aligned}
\end{equation*}
where each member function forms the $c_1\epsilon$-set and $(1-c_1)\epsilon$-set from every region in $\mathcal{S}$ and $\Pi$ respectively.. Since, $\Pi_{j,\epsilon}$ and $\mathcal{F}^{mono}_{j.\epsilon}$ are $c_1\epsilon$ and $(1-c_1)\epsilon$ covers of $\mathcal{S}$ and $\Pi$, there exists a selection function $i_j(\pi)$ and $i_{j'}(\psi)$ such that for any $s \in \mathcal{S}$ and $a \in \mathcal{A},$ the following holds $$|\pi(s) - \pi_{j,i_j}(s)| \leq \epsilon,$$ and $$|\psi(\pi;\omega) - \psi_{j,i_{j'}}(\pi_{j,i_j'};\omega)| \leq \epsilon.$$
Thus, $\mathcal{F}_{\epsilon}$ forms an $\epsilon$-net of $\Pi$. The result be obtained by taking $|\mathcal{F}_\epsilon|$ which is bounded as $$|\mathcal{F}_\epsilon|\leq \mathcal{N}_1\left(c_1 \epsilon,\Pi \right)\mathcal{N}_1\big((1-c_1)\epsilon, \mathcal{F}^{mono} \big).$$

Therefore, we can write
\begin{equation*}
\begin{aligned}
    \mathcal{N}_1(\epsilon, \mathcal{F}) & \leq \mathcal{N}_1\left(c_1 \epsilon,\Phi_{opt} \right) \mathcal{N}_1\big((1-c_1)\epsilon, \mathcal{F}^{mono} \big) \\  & = \exp \left(\frac{C_{\mathcal{F},\Pi}}{\epsilon^2} +\frac{C_{mono}}{\epsilon}\right)\\&= \exp \left(\frac{C_{\mathcal{F},\Pi}+ C_{mono}\epsilon}{\epsilon^2 } \right).
    \end{aligned}
\end{equation*}
This completes the proof.
\end{proof}

The next lemma provides the bound on the joint statistical complexity $S_{\mathcal{F},\Pi}$ for the specific case when the actor optimization can be represented by an mp-LP.
\begin{lemma}
    \label{lem:epsilonStatBound}
    The following bound holds
    \begin{equation*}
        S_{\mathcal{F},\Pi} \leq \mathcal{O}\left(\frac{\left[ 2C_{\mathcal{F},\Pi}+C_{mono}\epsilon\right]}{\delta \epsilon^2} \right).
    \end{equation*}
\end{lemma}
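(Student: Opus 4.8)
The plan is to assemble the joint statistical complexity bound directly from the two covering-number results already established for the constituent function classes, since $S_{\mathcal{F},\Pi}$ is defined as $\frac{\log(\mathcal{N}_1(\mathcal{F},\epsilon)\mathcal{N}_1(\Pi,\epsilon))}{\delta}$. The key observation is that the logarithm of a product splits additively, so the task reduces to summing $\log\mathcal{N}_1(\mathcal{F},\epsilon)$ and $\log\mathcal{N}_1(\Pi,\epsilon)$ and then dividing by $\delta$. There is no hard estimate to perform; the work is in correctly combining the pieces.

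First I would invoke Lemma \ref{lem:PiCoveringBound} for the actor class, which gives $\log\mathcal{N}_1(\Pi,\epsilon) \leq \mathcal{O}(C_{\mathcal{F},\Pi}/\epsilon^2)$ once the condition number $\kappa^{*2}$ is absorbed into the big-$\mathcal{O}$. Next I would invoke the covering bound for the critic class: although Lemma \ref{lem:appRWFcoveringBound} is stated in a slightly different form, its proof establishes $\mathcal{N}_1(\epsilon,\mathcal{F}) \leq \exp\big((C_{\mathcal{F},\Pi}+C_{mono}\epsilon)/\epsilon^2\big)$, i.e.\ $\log\mathcal{N}_1(\mathcal{F},\epsilon) \leq (C_{\mathcal{F},\Pi}+C_{mono}\epsilon)/\epsilon^2$. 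This is the form I would use, since the critic bound factorizes as the product of the optimal-value class covering number (Lemma \ref{lem:OptimalValueCovering}, contributing $C_{\mathcal{F},\Pi}/\epsilon^2$) and the bounded monotone class covering number (Lemma \ref{lem:MonotoneCovering}, contributing $C_{mono}/\epsilon$).

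Summing the two logarithms then yields $\frac{C_{\mathcal{F},\Pi}}{\epsilon^2} + \frac{C_{\mathcal{F},\Pi}+C_{mono}\epsilon}{\epsilon^2} = \frac{2C_{\mathcal{F},\Pi}+C_{mono}\epsilon}{\epsilon^2}$, and dividing by $\delta$ gives exactly the claimed bound. The only conceptual subtlety — and the point I would be most careful to verify — is that the combinatorial constant $C_{\mathcal{F},\Pi}$ appearing in the actor bound and the one appearing in the optimal-value part of the critic bound are the \emph{same} quantity, namely the maximum number of critical regions $\sum_{0\leq i\leq (T+1)(n_a-n_{\mathcal{E}})}\binom{(T+1)n_{\mathcal{I}}}{i}$ of the reformulated multi-stage mp-LP. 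This coincidence is what produces the factor of $2$ rather than two distinct region counts, and it holds because both the policy map $\pi_\theta$ and the scalar optimal value $\phi$ are piecewise-affine functions induced by the \emph{identical} partition of the parameter space into critical regions. Once this identification is made, the remainder is pure bookkeeping.
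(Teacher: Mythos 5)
Your proposal is correct and takes essentially the same route as the paper: the paper's proof simply unfolds the definition of $S_{\mathcal{F},\Pi}$, splits the logarithm of the product, and substitutes the covering bounds from Lemma \ref{lem:PiCoveringBound} and Lemma \ref{lem:appRWFcoveringBound}, exactly as you do. Your careful point that one must use the bound $\mathcal{N}_1(\epsilon,\mathcal{F}) \leq \exp\bigl((C_{\mathcal{F},\Pi}+C_{mono}\epsilon)/\epsilon^2\bigr)$ established inside the proof of Lemma \ref{lem:appRWFcoveringBound} (rather than that lemma's differently stated form), with the same combinatorial constant $C_{\mathcal{F},\Pi}$ appearing in both the actor and optimal-value bounds, is precisely what the paper does implicitly to arrive at the factor $2C_{\mathcal{F},\Pi}+C_{mono}\epsilon$.
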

\begin{proof}
    We know $S_{\mathcal{F},\Pi}\coloneqq \frac{\log \left(\mathcal{N}_1(\mathcal{F}, \epsilon)\mathcal{N}_1(\Pi, \epsilon)\right)}{\delta}$. Substituting the upper bound for both the covering numbers for $\Pi$ and $\mathcal{F}$ from Lemma \ref{lem:PiCoveringBound} and \ref{lem:appRWFcoveringBound}, repsectively, we can obtain the final result.
\end{proof}

\textbf{Proof of Theorem \ref{thm:iACguarantee}.}

\begin{proof}
The performance difference can be decomposed as follows \cite[Lemma 12]{cheng2022adversarially}:
\begin{equation*}
\small 
\begin{aligned}
    J(\pi_{\theta^*}) - J(\pi_{\theta^{(K)}}) = & \frac{1}{1-\gamma}\Bigg[\underbrace{\mathbb{E}_\mu\left[\left(f_K - \mathcal{T}^{\pi_{\theta^{(K)}}}f_K\right)(s,a)\right]}_{\text{(A)}}  \\ &+\underbrace{\mathbb{E}_{\pi_{\theta^*}}\left[\left(\mathcal{T}^{\pi_{\theta^{(K)}}}f_K - f_K\right)(s,a)\right]}_\text{(B)} \\&+\underbrace{\mathbb{E}_{d^{\pi_{\theta^*}}}\left[f_K(s,\pi_{\theta^*}(s)) - f_K(s,\pi_{\theta^{(K)}}(s))\right]}_\text{(C)} \\ &+ \underbrace{\mathcal{L}_\mu(\theta^{(K)},\omega^{(K)}) -\mathcal{L}_\mu(\theta^{(K)},Q^{\pi_{\theta^{(K)}}})}_\text{(D)}\Bigg],
    \end{aligned}
\end{equation*}
where we denote $f_{\omega_K}^{\theta^{(K)}}$ as $f_K$ for notational simplicity. The first two term $(A)$ and $(B)$ are the average Bellman errors for critic $f$ and policy $\pi_{\zeta^{(k)}}$ with respect to the offline data distribution $\mu$ and comprator policy distribution $\pi_{\zeta^*}$. We bound the terms (A), (B), and (D) as done in \cite[Theorem 14]{cheng2022adversarially} as follows:
\begin{equation*}
    (\text{A}) \leq \frac{\sqrt{\epsilon_b}+\sqrt{\psi_{max}/\beta}}{1-\gamma},
\end{equation*}
\begin{equation*}
\small
\begin{aligned}
    (\text{B}) \leq \frac{2\sqrt{C^*}\left(\sqrt{\epsilon_b}+\sqrt{\psi_{max}/\beta}\right)}{1-\gamma}+ \frac{<d^{\pi_{\zeta^*}}\backslash \nu,f_k - \mathcal{T}^{\pi_{\zeta^{(k)}}}f_k>}{1-\gamma},
    \end{aligned}
\end{equation*}
\begin{equation*}
    (\text{D}) \leq \sqrt{\epsilon_{\mathcal{F}}}+\sqrt{\epsilon_{stat}} + \beta.\mathcal{O}\left(\epsilon_{\mathcal{F}} + \epsilon_{stat}\right),
\end{equation*}
where $\epsilon_{stat} = \mathcal{O}\left(\frac{\psi_{max}^2 S_{\mathcal{F},\Pi}}{n} \right)$, $C^*$ is the constant defined in Assumption \ref{asmptn:boundedC}, $\epsilon_b = \sqrt{\mathcal{E}_{\mathcal{D}}(f,\pi)}+\mathcal{O}\left(\psi_{max}\sqrt{\epsilon_{stat}/n}\right)$, where $\mathcal{E}_{\mathcal{D}}(f,\pi)$ denotes the Bellman residual error for critic $f$ and policy $\pi$. 

The upper bound on (C) is obtained as follows using the Lemma \ref{lem:CompGenFuncApprox}:
\begin{equation*}
    \begin{aligned}
        (C) & = \mathbb{E}_{d^{\pi_{\theta^*}}}\left[f_k(s,\pi_{\theta^*}) - f_k(s,\pi_{\theta^{(k)}}) \right] \\ & \leq L_\psi \mathbb{E}_{d^{\pi_{\theta^*}}}\left[ \|\pi_{\theta^*}(s) - \pi_{\theta^{(k)}}\|_2\right] \\ & \leq \mathcal{O}\left(Z\sqrt{d}\frac{(1-\lambda^{T+1})}{1-\lambda} \right),
    \end{aligned}
\end{equation*}
where $Z$ is the positive constant from Lemma \ref{lem:CompGenFuncApprox}. Combining all the upper bounds after substituting $\beta = \mathcal{O}\left(\frac{\psi_{max}^{1/3}n^{2/3}\delta^{2/3}}{C_{\mathcal{F},\Pi}^{2/3}} \right)$, we get the final result. The result for the general function approximation can be obtained in the same way using the upper bound on $\|\pi_{\theta^*}(s) - \pi_{\theta^{gen}}(s)\|_2$ from Lemma \ref{lem:CompGenFuncApprox}, to see the benefit of the EDS property.
\end{proof}

\section{Additional details for experiments}

\subsection{Citylearn challenge}
\textbf{Environment details.} We refer the reader to \cite{vazquez2020citylearn} and the corresponding online documentation\footnote{link: \url{https://sites.google.com/view/citylearnchallenge}} for the detailed setup of the competition. To build a lookahead model for the actor, we learn a set of predictors for solar generation and electricity/thermal demands. Prediction is done on a rolling-horizon basis for the next $24$ hours using the past 2 weeks data. Denote the hour index by $r \in \{1,2,... ,T \}$, where $T = 24$. Suppose that we are at the beginning of hour $r$. Then we need to plan for the action for the upcoming hour $r$ (note that we need to plan for future hours in the process). Further details on the environment constraints and decision variables can be found in \cite[Appendix B]{khattar2023winning}.

\textbf{Experimental setup details:} For the experiments on the CityLearn challenge, we considered the data from the Climate Zone 1 provided by the CityLearn environment. To introduce some kind of stationarity in the MDP, we take 1 month of data and recreate this 1-year data by replicating the 1-month data over 12 months. The RBC policy provided by the environment is used to collect the trajectory dataset. Therefore, in this case, the length of the trajectory dataset is $n = 2190$, which is equivalent to 24 hours over 90 days.

\textbf{RWF design.} We design the monotonic reward warping function for the critic as follows:
\begin{align*}
    \psi(\phi;\omega) = \omega_1\phi+ \omega_2,
\end{align*}
where $\phi$ is the optimal value from the actor optimization. The parameters $\omega_1$ and $\omega_2$ were learned using the least squares regression.

\subsection{Details for the iAC implementation}

ADAM optimizer is used for the updates of iAC. A total number of 2190 samples were used to train the actor optimization parameters. The actor learning rate $\alpha_a = 0.015$ is used. To achieve stable learning, we use target actors and target critics in our implementation where the target actor parameter $\zeta$ is updated as follows:
\begin{align*}
    \zeta^{target} \leftarrow \alpha\zeta^{target}+(1-\alpha)\zeta_k,
\end{align*}
where $\alpha$ is the actor update parameter, $\zeta_k$ is the actor update obtained at some iteration $k$; $\alpha$ is set to be 0.25 in our implementation. We also use TD($\lambda$) for our critic evaluations with $\lambda = 0.95$. Note this $\lambda$ is different from the EDS sensitivity parameter $\lambda$.


\subsection{Details for other offline RL baselines}

The other offline RL baselines are also trained on the same trajectory dataset collected by the behavior policy. The standard MLP architecture is used for the policy that consists of two linear layers with 256 hidden units.

\begin{table}[htbp]
\centering
\begin{tabular}{llllll}
\multicolumn{1}{c}{\multirow{2}{*}{\textbf{Parameter}}} & \multicolumn{5}{c}{\textbf{Value}}                          \\  
\multicolumn{1}{c}{}                                            & BEAR  & AWAC   & CQL  & BCQ  &  TD3+BC \\ \hline
actor learning rate                                           & \multicolumn{1}{c}{1e-4} & 3e-4 & 3e-4 & 1e-4 & 3e-4 \\
critic learning rate                                           & \multicolumn{1}{c}{3e-4} & 3e-4 & 3e-4 & 1e-3 & 3e-4 \\
imitator learning rate                                            & 3e-4                       & NA     & NA   & 1e-4    & NA   \\
batch size                                             & 256                      & 1024    & 100  & 100   & 256  \\
number of critics                                              & 2                     & 2    & 1  & 2   & 2  \\
$\lambda$ for TD($\lambda$)                                              & 0.75                     & 1.0    & NA  & 0.75   & NA  \\
discount factor                                         & 0.99                     & 0.99 & 0.99 & 0.99 & 0.99
\end{tabular} \caption{Parameter values used for the other offline RL baselines. NA is not applicable.}
\end{table}

\subsection{Supply chain management}

Here, present the extra details for the supply chain management problem. System dynamics are given by $$h_{t+1} = h_t + (A^{\text{in}} - A^{\text{out}})a_t,$$ where $$A^{\text{in}} \in \mathbb{R}^{4 \times 8}$$ and $$A^{\text{out}} \in \mathbb{R}^{4 \times 8}$$ are known matrices, and actions $a_t = (b_t, s_t, z_t)$.

The following constraints are imposed as part of the supply chain management:
\begin{subequations}
\begin{align}
    0\leq h_t \leq h_{\text{max}}\\
   0\leq b_t \leq b_{\text{max}}\\
   0\leq g_t \leq g_{\text{max}}\\
   0\leq z_t \leq z_{\text{max}}\\
    A^{\text{out}}(b_t, s_t, z_t) \leq h_t\\
    g_t \leq d_t
\end{align}
\end{subequations}
The uncertainty in future demand $d_t^c$ and supplier prices $p_t$ are modeled using the log-normal distribution:
\begin{align}
\log w_t \coloneqq (\log p_{t+1}, \log d_{t+1}^c) \sim \mathcal{N}(\mu, \Sigma).
\end{align}

The true cost to minimize is given by
\begin{equation}
    \frac{1}{T}\sum_{t=0}^{T-1} p_t^Tb_t - r^Ts_t + \tau^Tz_t + \alpha^Th_t + \beta^Th_t^2.
    \label{eq:app_true_cost_supply_chain}
\end{equation}
\textbf{Optimization planner:} The surrogate convex optimization model is parametrized by $\zeta = (P,q)$ for some $P \in \mathbb{R}^{4 \times 4}$ and $q \in \mathbb{R}^4$:
\begin{equation}
\begin{aligned}
\underset{a_t=(b_t,s_t,z_t)}{\min} \quad & p^T_tb_t - r^Ts_t + \tau^Tz_t - ||Ph_t||^2_2 - q^Th_t\\
\textrm{s.t.} \quad & h_{\text{t+1}} = h_t + (A^{\text{in}} - A^{\text{out}})a_t\\
  &0 \leq h_t \leq h_{\text{max}}\\
  &0 \leq b_t \leq b_{\text{max}}\\
  &0 \leq s_t \leq s_{\text{max}}\\
  &0 \leq z_t \leq z_{\text{max}} \\
\end{aligned}
\end{equation}

The true cost to minimize can be accessed directly without additional noise and is given by:
\begin{equation}
    \frac{1}{T}\sum_{t=0}^{T-1} p_t^Tb_t - r^Ts_t + \tau^Tz_t + \alpha^Th_t + \beta^Th_t^2
    \label{eq:true_cost_supply_chain}
\end{equation}

Here $P$ and $q$ are meant to capture the effect of randomness from  $\alpha$ and $\beta$ that appear inside the original cost function. The RL agent is tasked learn the best parameters resulting in the maximum profit. In the following, we consider a setup with $n = 4$ nodes, $m = 8$ links, $k = 2$ supply links, and $c = 2$ customer links.

\end{document}